\documentclass[runningheads]{llncs}

\usepackage{eccv}

\usepackage{eccvabbrv}

\usepackage{graphicx}
\usepackage{booktabs}
\usepackage{amsmath}
\usepackage{amssymb}
\usepackage{mathtools}
\usepackage{wrapfig}
\usepackage{mathtools}
\usepackage{adjustbox}
\usepackage{multirow}
\usepackage[accsupp]{axessibility}  

\usepackage{hyperref}

\usepackage{orcidlink}

\begin{document}

\title{Steering Diffusion Models via Class-Contrastive Influence for Few-Shot Medical Classification} 

\titlerunning{
Steering Diffusion Models via C2I}

\author{Jeeyung Kim\orcidlink{0009-0000-2017-169X}\and Erfan Esmaeili \and
Qiang Qiu}

\authorrunning{J.~Kim et al.}

\institute{Purdue University, West Lafayette, USA\\\email{\{jkim17, efakhabi, qqiu\}@purdue.edu} 
}




\maketitle

\begin{abstract}
When labeled data are scarce, off-the-shelf diffusion models can augment training sets for few-shot medical image classification, but not all generated samples are equally useful for the downstream task. Existing approaches largely improve synthetic data by increasing realism, diversity, or domain adaptation, while overlooking a more fundamental question: how should sample usefulness for classification be measured and optimized? We address this with \textit{\textbf{C}lass-\textbf{C}ontrastive \textbf{I}nfluence} (C2I), a criterion that quantifies a sample’s usefulness through its gradient-based influence on the classifier. We find that effective samples exhibit a strong C2I gap: their loss gradients align with validation gradients from the same class and oppose those from other classes. Our analysis further suggests that such high-C2I samples are hard, boundary-proximal examples that help refine the decision boundary and improve robustness. Building on this insight, we fine-tune diffusion models with reinforcement learning using a C2I-based reward to steer generation toward class-informative samples. Across several few-shot medical imaging benchmarks, C2I-guided generation improves downstream accuracy and robustness over diffusion-based augmentation baselines, showing that synthetic augmentation is most effective when guided by task usefulness rather than image quality alone.
\end{abstract}    
\section{Introduction}\label{sec:intro}


Data augmentation is a standard tool for learning under label scarcity, and recent diffusion models~\cite{rombach2022high} have made synthetic augmentation especially attractive for few-shot classification. Prior work has mostly tried to improve synthetic data by increasing realism, diversity, or quantity, either by generating label-preserving variants of real images~\cite{huang2024active, zhang2023expanding, wang2024training} or by fine-tuning generators on limited in-domain data~\cite{kim2024datadream, zhang2023expanding, wang2024enhance}. However, the resulting downstream gains are often inconsistent. Some generated samples substantially improve the classifier, while others contribute little despite appearing similarly plausible. Figure~\ref{fig:set1_set2} illustrates this discrepancy: different subsets of generated images can lead to markedly different classification performance even when their visual quality is comparable.  We argue that the central limitation of current practice is therefore not generation quality alone, but the lack of a principled notion of \textit{task usefulness}. In low-data regimes, where every added example matters, the key question is not how to generate more realistic images, but how to generate synthetic images that are actually useful for classification.

\begin{figure}
  \centering
  \includegraphics[width=0.5\textwidth]{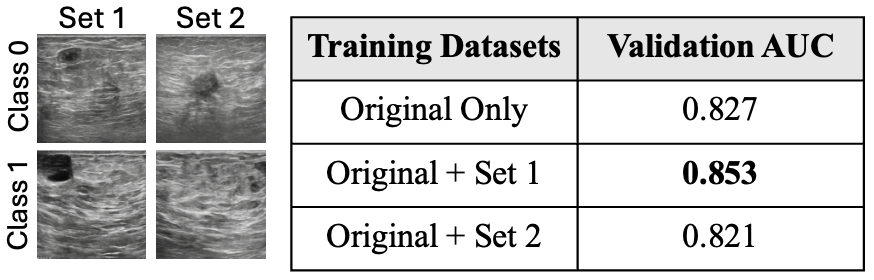}
  \small
  \caption{We fine-tune Stable Diffusion on 32 BreastMNIST images per class and generate 20 synthetic images per class for Sets 1 and 2. {\scriptsize  \texttt{Original only}} uses only real images. Set 1 produces stronger classification AUC than Set 2.}
  \label{fig:set1_set2}
\end{figure}

\textbf{A criterion for sample effectiveness.} We address this question through \textit{\textbf{C}lass-\textbf{C}ontrastive \textbf{I}nfluence} (C2I), a criterion that evaluates a sample by how its training signal interacts with the downstream classification task. We find that a sample is most useful when its loss gradient aligns with validation gradients from the same class while opposing those from other classes, yielding a strong class-contrastive signal and therefore a high C2I score. Our analysis provides theoretical support for this view: maximizing C2I draws features toward the global mean of the validation set, which tends to lie near the decision boundary. As a result, high-C2I samples are typically hard, boundary-proximal examples. Training on such samples encourages the classifier to refine its decision boundary, leading to improved generalization.

\begin{figure*}[t]
  \centering
  \includegraphics[width=0.95\textwidth]{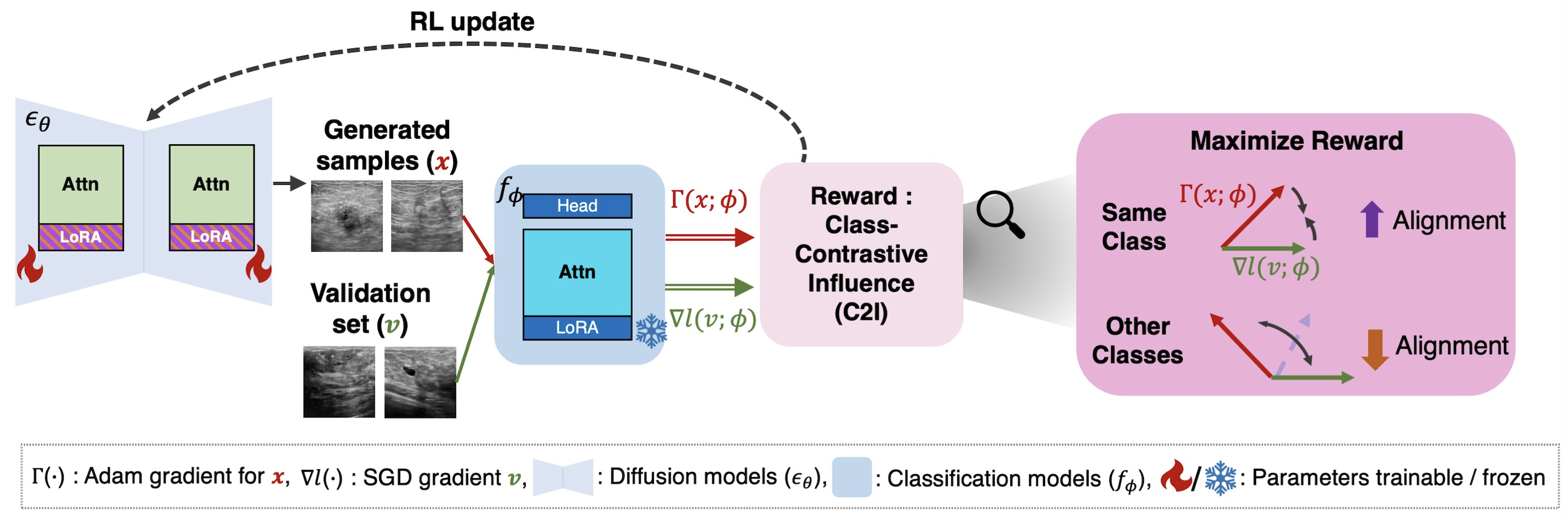}
  \caption{
    \textbf{Overview of Class-Contrastive Influence (C2I).} Gradients from generated and validation samples are compared, and the C2I reward is computed from their alignment: encouraging positive alignment with same-class validation gradients and negative alignment with opposite-class gradients. This reward, defined jointly by the classifier, generated data, and validation data, guides reinforcement learning fine-tuning of the generator toward decision-boundary-proximal samples.
  }
  \label{fig:framework}
\end{figure*}

\textbf{RL fine-tuning diffusion models for effective data augmentation.} Building on this insight, we propose a general fine-tuning scheme that turns an off-the-shelf diffusion model into a targeted data generator for classification, as depicted in Figure~\ref{fig:framework}. We design a reward based on C2I that scores generations by the degree to which their induced classifier gradients align with same-class validation gradients and oppose other-class gradients. We then fine-tune the generator with reinforcement learning (RL) to maximize C2I, steering toward hard, class-informative regions of the data manifold. The method is plug-and-play, requiring no architectural modifications to the generator or the classifier, and uses only a validation split to compute influence signals. Unlike prior approaches that focus on realism or diversity, C2I explicitly optimizes for task usefulness, enabling the generator to produce examples that sharpen decision boundaries.

\textbf{Scope and setting.} We evaluate this targeted augmentation in few-shot medical image classification, a regime where label scarcity and distribution shift routinely limit performance and where realistic yet task-useful synthetic data can be especially valuable. We compare against strong baselines, including standard transformation-based and other diffusion-based augmentation methods. Across multiple datasets with limited labels, our C2I-guided generator consistently improves accuracy and robustness, yielding models that generalize more effectively under domain shift \textbf{without adding test-time generation overhead}.

\textbf{Contributions.}
\begin{itemize}
\item We formalize effectiveness through \textit{Class-Contrastive Influence (C2I)} and show that high-C2I samples correspond to hard examples.  
\item We propose an RL-based scheme that leverages C2I as a reward to optimize diffusion models for generating boundary-proximal, class-informative samples.  
\item On multiple few-shot tasks, C2I consistently outperforms standard and diffusion-based augmentation baselines, establishing it as a principled strategy for low-data regimes.  
\end{itemize}
Together, our study suggests a shift in using diffusion models for augmentation: rather than relying on realism and diversity, we should optimize generation for its downstream influence on the classifier. By aligning synthetic data with validation gradients, C2I turns off-the-shelf diffusion models into targeted generators that strengthen decision boundaries where labeled data are most scarce.

\section{Preliminary}\label{sec:prelim}
This section outlines the key concepts underlying our method.
Section~\ref{sec:notation} defines notation, Section~\ref{sec:inf} reviews gradient-based influence estimation, and Section~\ref{sec:rl} presents the RL framework for fine-tuning diffusion models.

\subsection{Notation} \label{sec:notation}
Let the number of classes be $K\ge 2$. We write the training set as
$\mathcal{D}=\bigcup_{c=1}^{K}\mathcal{D}_c$ and the validation set as
$\mathcal{V}=\bigcup_{c=1}^{K}\mathcal{V}_c$, where $\mathcal{D}_c$ and
$\mathcal{V}_c$ contain samples of class $c$. We use small Latin letters for
vectors (e.g., image samples and feature vectors) $x, v, h \in \mathbb{R}^d$,
and Greek letters $\alpha,\beta,\ldots$ for scalars. The reward is computed
over \emph{sets} of images: a set is a batch
$\mathbf{x}=\{x^{(j)}\}_{j=1}^{n}$ generated by the diffusion model,
conditioned on a specific class $c$; thus $\mathbf{x}\subset\mathcal{D}_c$.
Validation samples are denoted by $v\in\mathcal{V}_c$. We denote the diffusion
model by $\epsilon_\theta$ and the ViT-based classifier~\cite{dosovitskiy2020image}
by $f_\phi$. The cross-entropy loss with respect to $f_\phi$ is written as $\ell(x;\phi)$.
The cross-entropy loss for $K$-way classification is written as
$\ell(x;\phi)$.

\subsection{Gradient-based Influence Estimation}\label{sec:inf}
\cite{pruthi2020estimating} quantifies a training example's influence by tracking its impact on validation loss using gradient information. If $x$ is a training sample, $v$ a validation sample, the change in the validation loss by a single parameter update can be approximated as
\begin{equation}
\begin{aligned}
\ell(v; f_{t+1}) - \ell(v; f_t)
&= -\eta_t \left\langle \nabla \ell(x; f_t),\, \nabla \ell(v; f_t) \right\rangle \\
&\quad + \mathcal{O}\!\left(\left\|\nabla \ell(v; f_t)\right\|^2\right).
\end{aligned}
\end{equation}
where $f_t$ represents the model at training iteration $t$, and $\ell(\cdot ;\cdot)$ is the loss function. This approximation suggests that if the loss gradients are positively aligned (i.e. $\nabla\ell(x)\sim \gamma \nabla\ell(v),\, \gamma >0$), the training sample is maximally effective in reducing validation loss.

\cite{xia2024less} extend this observation to Large Language Models (LLMs) with several modifications: (1) adapting gradient estimation to the Adam optimizer, (2) normalizing with cosine similarity, (3) computing only LoRA~\cite{hu2022lora} gradients for efficiency, and (4) applying random projection~\cite{park2023trak} for dimensionality reduction. We adopt these modifications to compute influence, as they are well suited for large transformer-based models.

Specifically, for a classification model $f_\phi$, let \( \tilde{\nabla} \ell \in \mathbb{R}^Q\) and \( \tilde{\Gamma}\in\mathbb{R}^Q \) denote the SGD and Adam optimizer LoRA gradients, respectively. The choice of optimizer (SGD or Adam) is not fundamental to our method. We used the Adam optimizer in our experiments, as it is commonly employed for training ViT classifiers.

 Given a set $\mathbf{x}\in \mathcal{D}_c$ of  training samples and a validation sample $v\in\mathcal{V}_{\bar{c}}$ for some $c,\bar{c}\in\{0,1\}$, we compute \textit{Influence} as 
\begin{equation}
  \mathcal{A}^{\phi}(\mathbf{x},v)\triangleq\cos\left({\nabla} \ell(v; \phi), {\Gamma}(\mathbf{x}; \phi)\right),
  \label{eq:inf}
\end{equation}
where LoRA gradients are projected into a low \( q \)-dimensional space via random projection \( \Pi \in \mathbb{R}^{Q \times q} \), such that
\(
\nabla \ell(v; \phi) = \Pi^\top \tilde{\nabla} \ell(v; \phi)
\).



\subsection{Reinforcement Learning Framework for Fine-tuning Diffusion Models}\label{sec:rl} 
Reinforcement learning (RL) fine-tuning enhances diffusion models by optimizing generation through reward feedback rather than likelihood maximization. The objective is to maximize the expected reward, i.e. $
\mathcal{L}_{\text{RL}} = \mathbb{E}_{p_\theta(x)} \left[ r(x) \right],
$
optimized using denoising diffusion policy optimization (DDPO)~\cite{black2023training}. To support multi-step updates, DDPO uses importance sampling, resulting in the gradient:
\begin{equation}
\begin{aligned}
\nabla_{\theta} J_{\text{DDRL}}
= \mathbb{E}\Biggl[
&\sum_{t=1}^{T} w_t(\theta)\, \cdot \nabla_{\theta} \log p_{\theta}(x_{t-1} | x_t, C)\, r(x_0, C)
\Biggr],\\[-2pt]
w_t(\theta)
&\triangleq
\frac{p_{\theta}(x_{t-1}\mid x_t, C)}
     {p_{\theta_{\text{old}}}(x_{t-1}\mid x_t, C)}.
\end{aligned}
\end{equation}
where \( \theta \) is the model, \( C \) the context, \( x_t \) the intermediate state, and \( x_0 \) the final output\footnote{With a slight abuse of notation, we find it convenient to use the same notation $x_0$ to denote the class membership $x$ $\in$ $\mathcal{D}_0$ in later sections.}. This formulation aligns diffusion models with task-specific objectives, enabling preference-guided generation.


\section{The Proposed Method}\label{sec:method}
In this section, we introduce a principled data augmentation strategy to improve classification performance. We argue that the usefulness of a synthetic sample should be judged by its \textit{influence} on the downstream classifier, and that this criterion should guide the generation process to enable targeted data generation.

To this end, we formalize what makes training samples effective through the notion of \textit{influence} and propose an RL-based fine-tuning framework for diffusion models that encourages the generation of such samples. Section~\ref{sec:gradient_alignment} defines \textit{class-contrastive influence} as a key property of effective samples, Section~\ref{sec:hard_example} establishes its theoretical connection to sample \textit{hardness}, and Section~\ref{sec:gradient_rl} presents the RL fine-tuning framework. An overview of our approach is provided in Figure~\ref{fig:framework}.

\subsection{Class-Contrastive Influence as a Key Property of Effective Samples}\label{sec:gradient_alignment}
\begin{figure*}[t]
  \centering
  \begin{subfigure}[b]{0.3\textwidth}
    \includegraphics[width=\linewidth]{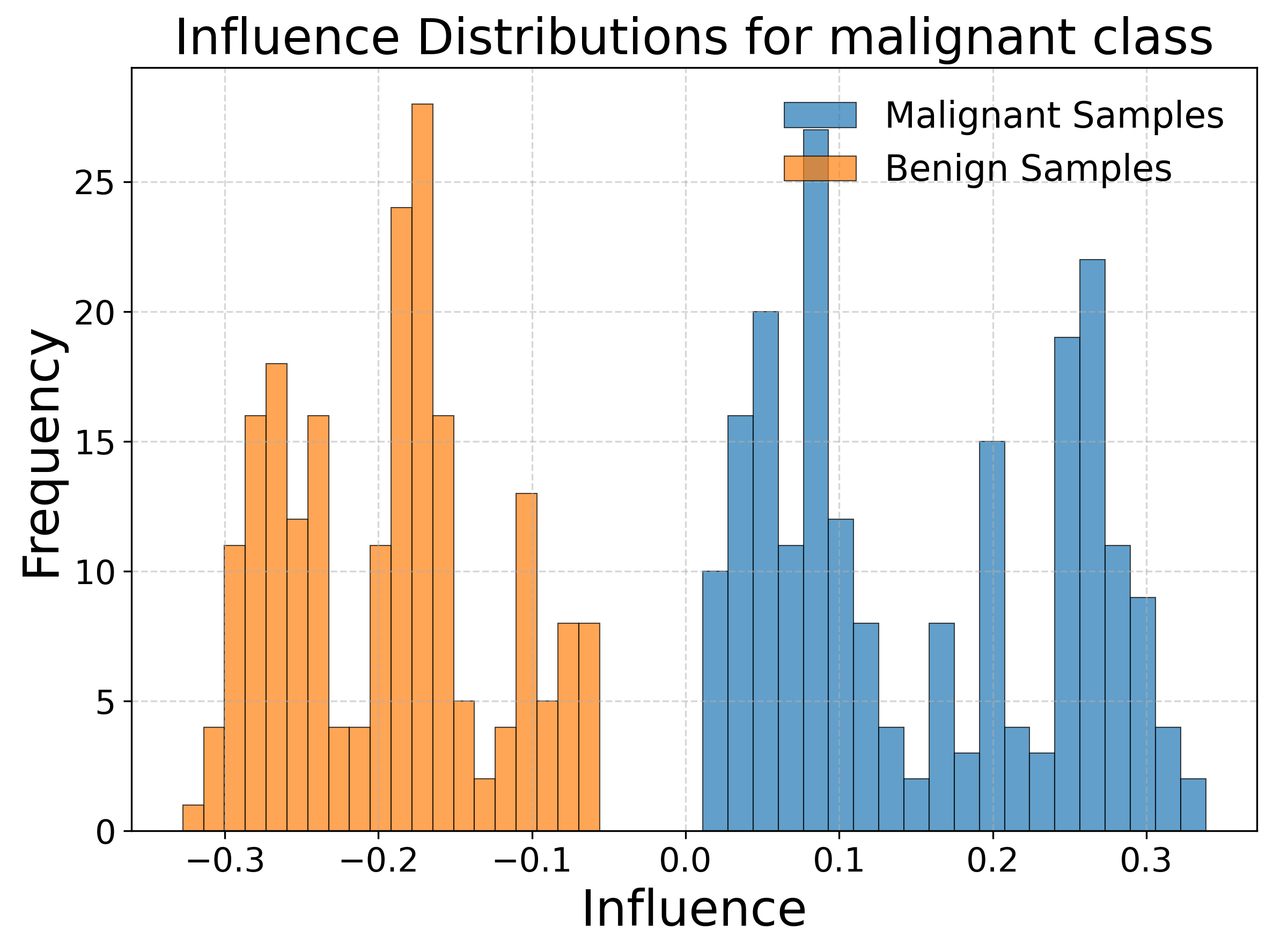}
    \caption{}
    \label{fig:two_alignment_dist}
  \end{subfigure}
  \begin{subfigure}[b]{0.34\textwidth}
    \includegraphics[width=\linewidth]{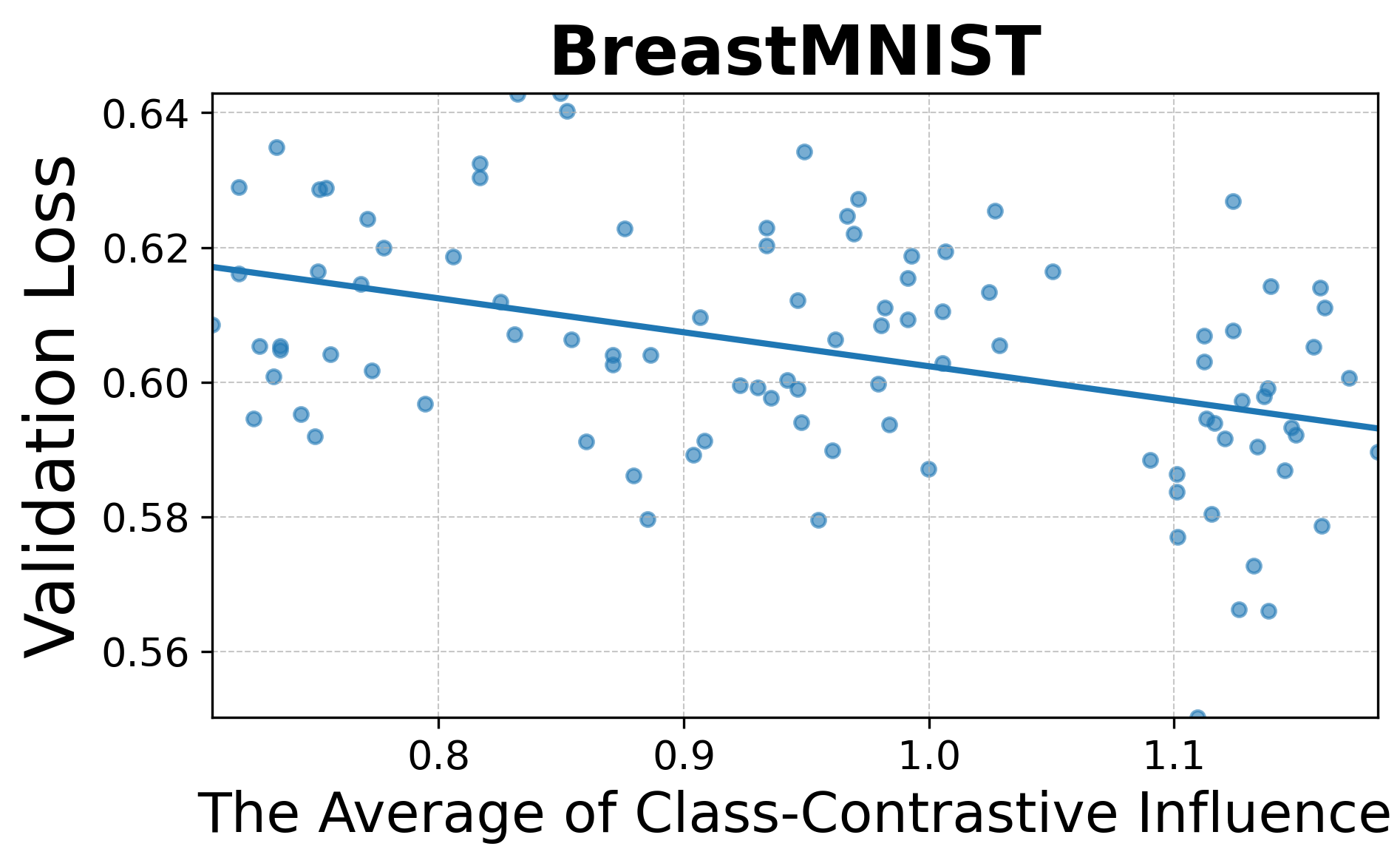}
    \caption{}
    \label{fig:correlation_gad}
  \end{subfigure}
  \begin{subfigure}[b]{0.3\textwidth}
    \includegraphics[width=\linewidth]{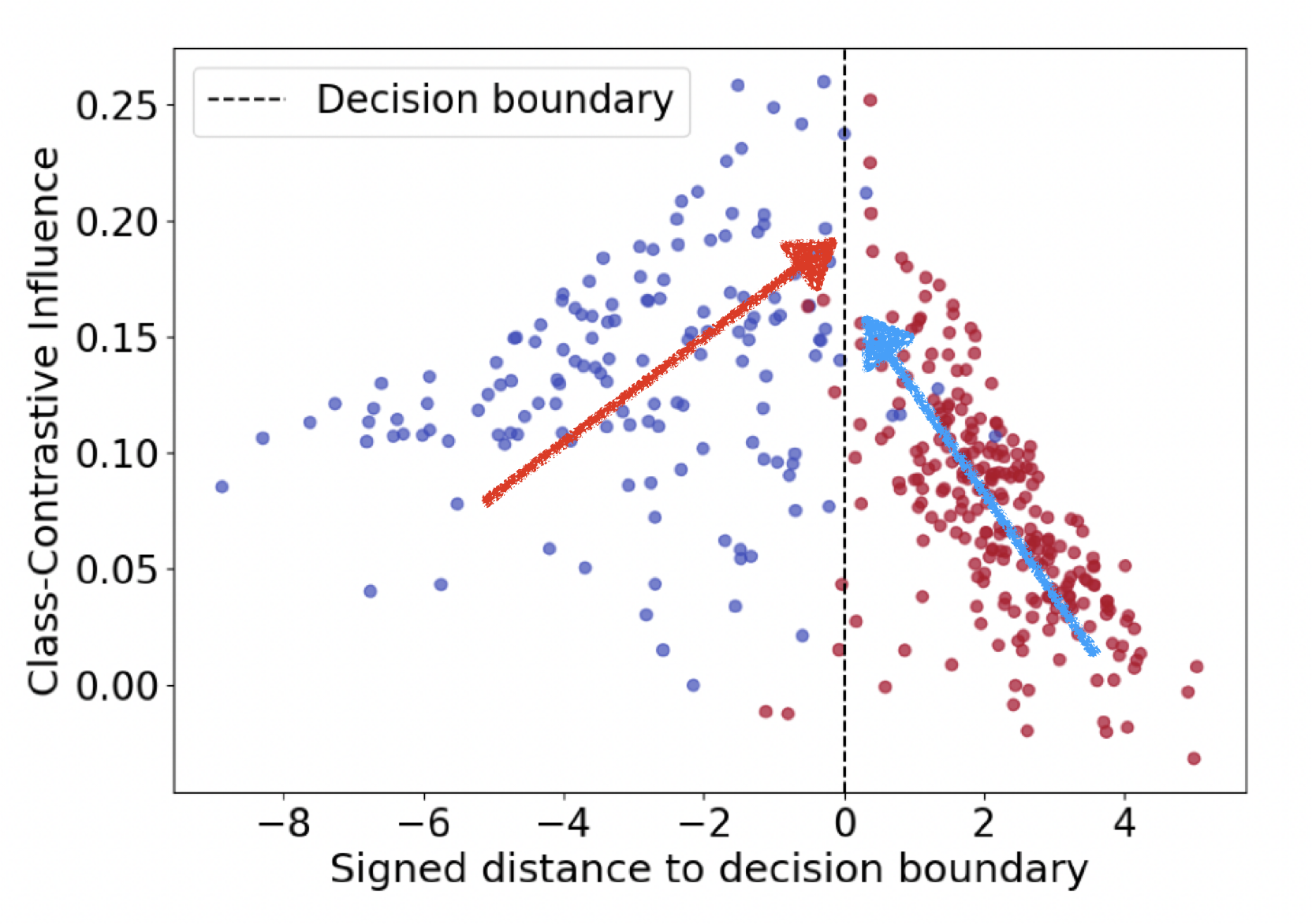}
    \caption{}
    \label{fig:gad-boundary}
  \end{subfigure}
  \caption{(a) With the BrestMNIST dataset, we compute \textit{influence} between synthesized \textit{malignant} images and validation samples. They show positive scores with \textit{malignant} and negative with \textit{benign}, demonstrating clear separation.
  (b) The average of C2I is negatively correlated with validation loss ($r=-0.241$, $p=0.015$), unlike raw influence scores (Figure \ref{fig:Inf_correlation}). (c) Relationship between C2I and the signed distance to the decision boundary in logistic regression. Samples with higher C2I tend to lie closer to decision boundary, corresponding to harder examples.}
\end{figure*}

We begin by analyzing what makes a sample effective for classification, using gradient-based influence (defined in eq.~\eqref{eq:inf}) to quantify its impact on validation loss. While selecting data with high influence scores is a successful strategy in some settings like fine-tuning LLMs \cite{xia2024less}, this principle can fail in classification. The reason is that the validation set contains conflicting signals from different classes; a sample that helps one class may harm another. As a result, raw, class-agnostic influence scores show no correlation with classification performance, as illustrated in Appendix Figure~\ref{fig:Inf_correlation}.

To understand this failure, we analyze influence scores on a per-class basis. As shown in Figure~\ref{fig:two_alignment_dist}, we find a distinct, class-contrastive pattern: a sample’s influence is consistently positive on validation data from its own class and negative on data from other classes. Theoretical analysis in Appendix~\ref{app:figure3_a_proof} confirms that this phenomenon holds more generally.
This insight leads us to hypothesize that a sample's value lies not in its overall influence, but in its ability to create a large separation, or gap, between the influence distributions of different classes.


To formalize this, we propose quantifying this separation in a binary classification task. For each synthetic sample $\mathbf{x}$ (conditioned on one class) and validation set $\mathcal{V}_c$ from class $c\in\{0,1\}$, we compute the class-specific mean and variance of influence:
\begin{align}
\mu_c(\mathbf{x},\mathcal{V}_c)
&\triangleq \frac{1}{N} \sum_{j=1}^{N}\mathcal{A}^{\phi}(\mathbf{x}, v_c^{j})
\label{eq:mean_std_grad}\\
\sigma_c^2(\mathbf{x},\mathcal{V}_c)
&\triangleq \frac{1}{N-1} \sum_{j=1}^{N}
\left(\mathcal{A}^{\phi}(\mathbf{x}, v_c^{j}) - \mu_c\right)^2,
\quad c \in \{0,1\}. \notag
\end{align}

in which $\mathcal{A}^\phi$ is defined in \eqref{eq:inf}. 

Using these statistics, we introduce the \textit{\textbf{C}lass-\textbf{C}ontrastive \textbf{I}nfluence} (C2I) score, which measures the influence gap:
\begin{equation}
C2I(\mathbf{x}, \mathcal{V}; \phi) = \frac{(\mu_0 - \mu_1)^2}{\sqrt{\sigma_0^2 + \sigma_1^2}},
\label{eq:d_align}
\end{equation}
which penalizes distributional overlap while amplifying mean separation, making it analogous to a class-separability score.

\paragraph{Multi-class extension.}
While we present C2I for binary classification, extending to $K$ classes is immediate.
Let $x_a$ be a training sample from class $a$, and let $\mathcal{V}_b=\{v_b^j\}_{j=1}^{N}$ be validation samples from class $b$.
We define the class-conditional mean influence
\begin{equation}
\mu_{ab}(x_a,\mathcal{V}_b)\triangleq \frac{1}{N}\sum_{j=1}^{N}\mathcal{A}^{\phi}(x_a,v_b^{j}),
\end{equation}
and set the C2I reward as the softmax score of the matching class:
\begin{equation}
\mathrm{C2I}(x_a)\triangleq
\frac{\exp(\mu_{aa})}{\sum_{b=1}^{K}\exp(\mu_{ab})}.
\label{eq:multi_class_inf}
\end{equation}
For $K=2$, this reduces to $\mathrm{C2I}(x_a)=\sigma(\mu_{aa}-\mu_{ab})$, matching the binary objective (cf.~$C2I$ in Eq.~\eqref{eq:d_align}).
See Appendix~\ref{app:multi_class} for more details.

Crucially, experimental results in Figure~\ref{fig:correlation_gad} confirm a consistent correlation between the C2I score and classification effectiveness (See Appendix~\ref{exp:toy_exp_detail} for experimental details). This establishes that \textit{the influence gap between classes is a key determinant of sample usefulness. }
This naturally raises the next question: \textit{why are samples with a large influence gap particularly effective?} 

In the next section, we uncover the property of C2I that explains their value for training.

\subsection{Understanding Class-Contrastive Influence through Sample Hardness}\label{sec:hard_example}


In this section, we theoretically and empirically show that samples with large \textit{Class-Contrastive Influence (C2I)} improve classification performance due to their connection to \textit{sample hardness}.
Our main theoretical result shows that the training example maximizing C2I is approximately the dataset average — which lies closer to the decision boundary than any individual class mean, and therefore has the highest classification loss. This explains why C2I selects ``hard'' examples that are most informative for the classifier.

\noindent\textbf{Theoretical evidence.}
We first show that, in the context of logistic regression, maximizing $C2I$ induces a feature-averaging effect that moves samples closer to the decision boundary—a known property of hard examples~\cite{srinidhi2021improving}.

\begin{theorem}
\label{theo:log-regx}
Consider a Logistic Regression model with output probabilities $p(x)=\text{Sigmoid}(w^\top x),\, x,w\in\mathbb{R}^d$  and cross-entropy loss $\ell(x)$. Let \(V_0\) and \(V_1\) be  validation sets containing \(N/2\) samples from class 0  and  class 1 respectively and $V=V_0\cup V_1$ (for even $N$). Consider the cosine similarity between the loss gradients of a sample \(x\) and a validation sample \(v_i\) as:
\[
\mathcal{A}(x, v_i) = \frac{\nabla \ell(x) \cdot \nabla \ell(v_i)}{\|\nabla \ell(x)\| \, \|\nabla \ell(v_i)\|},
\]
and define $
\mu_c(x) = \frac{1}{N} \sum_{i \in V_c} \mathcal{A}(x, v_i)$. Then the sample \(x\) that maximizes the absolute class influence gap \(|\mu_0(x) - \mu_1(x)|\) is given by  the convex combination
\begin{equation}
x^\star = \sum_{v_i\in V} \alpha_i v_i,\qquad \alpha_i =\frac{1/\|v_i\|}{\sum_{v_j\in V}1/\|v_j\|}.
\end{equation}
\end{theorem}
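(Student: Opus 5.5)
For logistic regression the gradient is $\nabla \ell(x) = (p(x)-y)\,x$, so the whole problem reduces to geometry on the sphere. The plan is to make this reduction explicit and then solve a linear maximization over unit vectors.

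\textbf{Step 1 (signs).} For a validation point $v_i$ with label $y_i$, write $s_i = \operatorname{sign}(p(v_i)-y_i)$. For $y_i=1$ we have $s_i=-1$, and for $y_i=0$ we have $s_i=+1$, independently of $w$ (assuming $p\in(0,1)$). Hence
\[
\frac{\nabla\ell(v_i)}{\|\nabla\ell(v_i)\|} = s_i\,\frac{v_i}{\|v_i\|}.
\]
Likewise $\nabla\ell(x)/\|\nabla\ell(x)\| = s_x\, x/\|x\|$, where $s_x$ depends only on the label of $x$. Therefore
\[
\mathcal{A}(x,v_i) = s_x s_i \,\frac{x\cdot v_i}{\|x\|\,\|v_i\|}.
\]

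\textbf{Step 2 (objective).} Substituting into the gap gives
\[
\mu_0(x)-\mu_1(x) = \frac{s_x}{N}\Big(\sum_{i\in V_0}\hat v_i + \sum_{i\in V_1}\hat v_i\Big)\cdot\hat x ,
\]
with $\hat u=u/\|u\|$. Class-0 points carry $s_i=+1$ and enter $\mu_0$ with a plus sign, while class-1 points carry $s_i=-1$ and enter $\mu_1$, which is subtracted. The two minus signs cancel, so every validation point contributes with the same sign. This cancellation is exactly what turns a "difference" into a "sum over the whole set", and it is where I expect the only subtlety. I would verify it carefully, including the convention that $\mu_c$ is normalized by $N$ rather than $N/2$, which does not matter.

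\textbf{Step 3 (maximize).} Set $m=\sum_{v_i\in V} v_i/\|v_i\|$. Then
\[
|\mu_0-\mu_1| = \frac{1}{N}\,\bigl|m\cdot\hat x\bigr|,
\]
which depends only on the direction of $x$. By Cauchy--Schwarz it is maximized exactly when $\hat x=\pm m/\|m\|$, assuming $m\neq 0$. Because the objective is scale invariant, any positive multiple of $m$ is a maximizer. Dividing $m$ by $\sum_j 1/\|v_j\|$ yields the stated convex combination $x^\star=\sum_i\alpha_i v_i$ with $\alpha_i\propto 1/\|v_i\|$. The statement should therefore be read as "up to positive scaling (and sign)". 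I would remark that this representative is singled out as the natural normalized weighted average, which lies in the convex hull of $V$.

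\textbf{Main obstacle.} The main obstacle is justifying that the maximizer is unique only up to scale. I would handle this by stating it explicitly rather than trying to pin down the norm. I would also exclude the degenerate cases $m=0$ and $x=0$.
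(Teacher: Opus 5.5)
Your proof is correct. The first half, the sign identity $\mathcal{A}(x,v_i)=s_x s_i\cos(x,v_i)$ and the cancellation that turns $\mu_0-\mu_1$ into a sum over all of $V$, matches the paper's gradient-cosine lemma and its first display exactly. The routes split at the maximization step.

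The paper does not treat the objective as scale-invariant. It uses the convention that the bias is absorbed into $w$, so every input, including the candidate $x$, has the form $x=(\tilde x,1)$. The squared gap then becomes $f(\tilde x)=(\tilde x\cdot a+\beta)^2/(\|\tilde x\|^2+1)$, with $a=\frac{1}{N}\sum_i \tilde v_i/\|v_i\|$ and $\beta=\frac{1}{N}\sum_i 1/\|v_i\|$. Setting the derivative to zero gives $\tilde x^\star=a/\beta$. Because the resulting weights sum to one, $\sum_i\alpha_i v_i$ again has last coordinate $1$, so it is feasible and equals the stated convex combination.

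That homogeneous-coordinate constraint removes the obstacle you flagged. It fixes both scale and sign, so the maximizer is unique, which is what ``the sample'' in the statement presupposes. It also excludes your degenerate case $m=0$ automatically, because the last entry of $m$ is $\sum_j 1/\|v_j\|>0$.

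Your Cauchy--Schwarz step gives something the paper's calculus does not: global optimality. The paper only exhibits a stationary point and never checks that it is the maximum. The two arguments combine cleanly. On the hyperplane $x_d=1$, the quantity $|m\cdot x|/\|x\|$ attains its bound $\|m\|$ exactly where the line $\mathbb{R}m$ meets that hyperplane, i.e.\ at $m/m_d=x^\star$. Adding the bias convention to your Step 3 therefore recovers the paper's uniqueness with a cleaner proof of optimality.

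Without that convention, ``unique up to nonzero scaling'' is the most that can be proved. Your choice of the normalized representative is then a convention rather than a consequence.
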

Here $\alpha_i>0$ for all $i=1,\cdots, N$ and $\sum_i \alpha_i = 1$. When the variance of $|v_i|$ is small (e.g., when the origin lies far from the data clusters), $\alpha_i \approx 1/N$ and $x^\star$ approaches the global average of the validation set.

This averaging effect directly relates to hardness. In logistic regression, if the class means $\bar{v}_0$ and $\bar{v}_1$ are predicted correctly (i.e., $p(\bar{v}_0)>\tfrac{1}{2}$ and $p(\bar{v}_1)<\tfrac{1}{2}$), then the global average $\bar{v}=\tfrac{1}{2}(\bar{v}_0+\bar{v}_1)$ exhibits higher loss and lower confidence than either class mean (see Lemma~\ref{mean loss} in Appendix~\ref{app:proof-logreg}). \textit{Thus, Theorem~\ref{theo:log-regx} implies that maximizing $C2I$ drives features toward the global mean across classes in the validation dataset, naturally producing harder examples.}
Ultimately, this forces the generator to create boundary-hugging, ``hard'' examples rather than safe, prototypical ones.
This analysis rests on logistic regression and is intended as intuition rather than a guarantee for deep classifiers.


\noindent\textbf{Empirical evidence.}
Figure~\ref{fig:gad-boundary} empirically confirms this link: in a logistic regression setup, samples with higher $C2I$ lie closer to the decision boundary, supporting their interpretation as hard examples. In this toy experiment, logistic regression is trained on the Breast Cancer dataset~\cite{breast_cancer_wisconsin_(diagnostic)_17}, where the original 30-dimensional features are reduced with PCA for visualization. The distance to the decision boundary is measured by the classification logit.
We further validate this observation in a larger-scale setting with our ViT classifier. As training progresses and $C2I$ increases, features of diffusion-generated synthetic images move closer to the centroids of validation features (Fig.~\ref{fig:centroid}). At the same time, the distance between synthetic class clusters decreases (Fig.~\ref{fig:cluster_dist}), indicating that features from different classes become more aligned.

These theoretical and empirical findings establish that high $C2I$ is connected to sample hardness. Since prior work~\cite{shrivastava2016training,hacohen2019power,song2024towards,srinidhi2021improving,liu2017easy,yuan2022easy} has shown that hard examples improve robustness and generalization, our results explain the trend in Figure~\ref{fig:correlation_gad}: maximizing $C2I$ guides diffusion models to generate harder, useful samples that help classifiers refine their decision boundaries.
  




\subsection{Maximizing Class-Contrastive Influence in Diffusion Models via RL}\label{sec:gradient_rl}


\begin{figure*}[t]
  \centering
    \begin{adjustbox}{max width=0.99\linewidth} 
    \begin{minipage}{\linewidth}
  \begin{subfigure}[b]{0.48\textwidth}
    \centering
    \includegraphics[width=\linewidth]{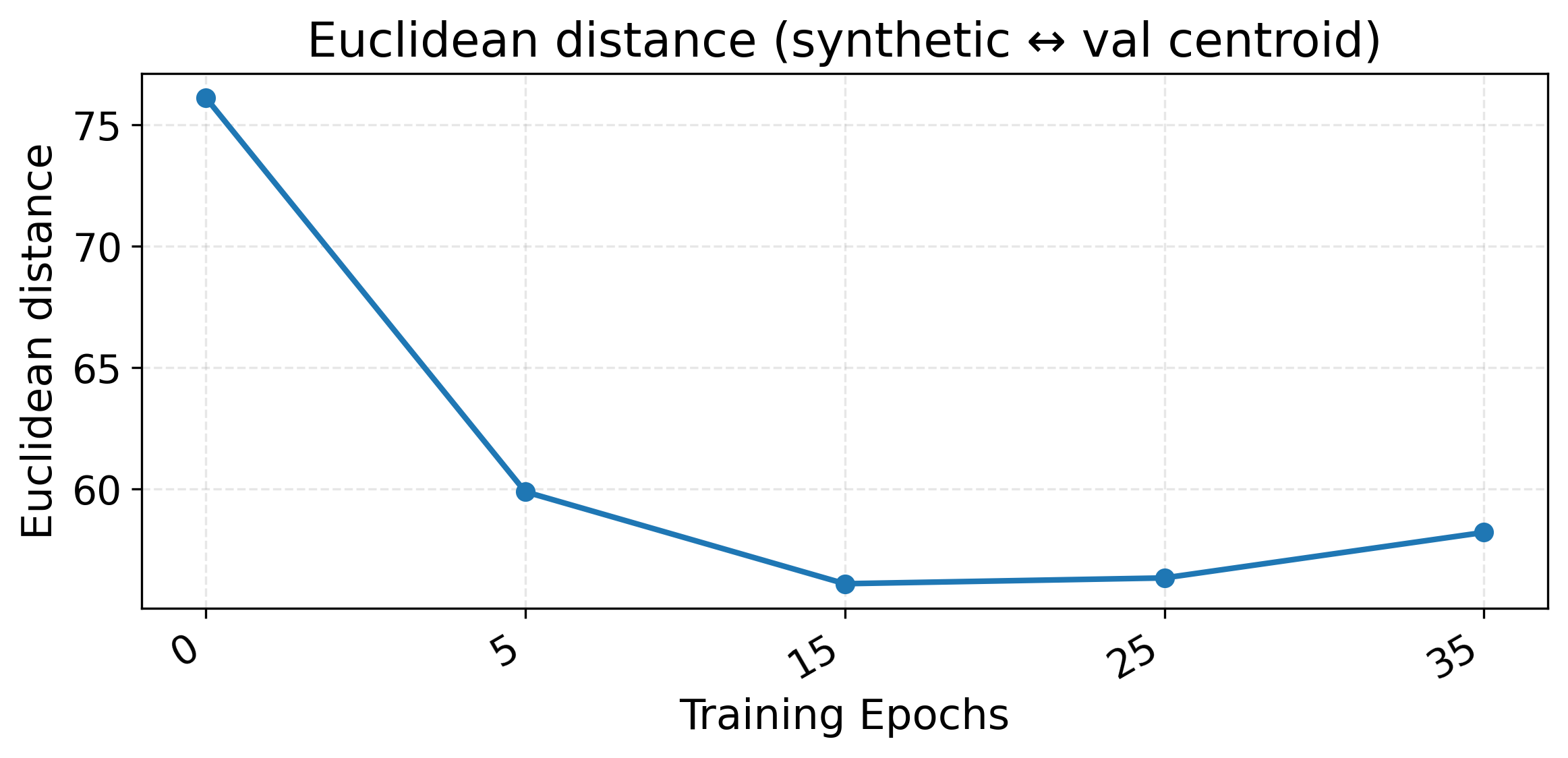} 
    \caption{}
    \label{fig:centroid}
  \end{subfigure}
  \hfill
  \begin{subfigure}[b]{0.48\textwidth}
    \centering
    \includegraphics[width=\linewidth]{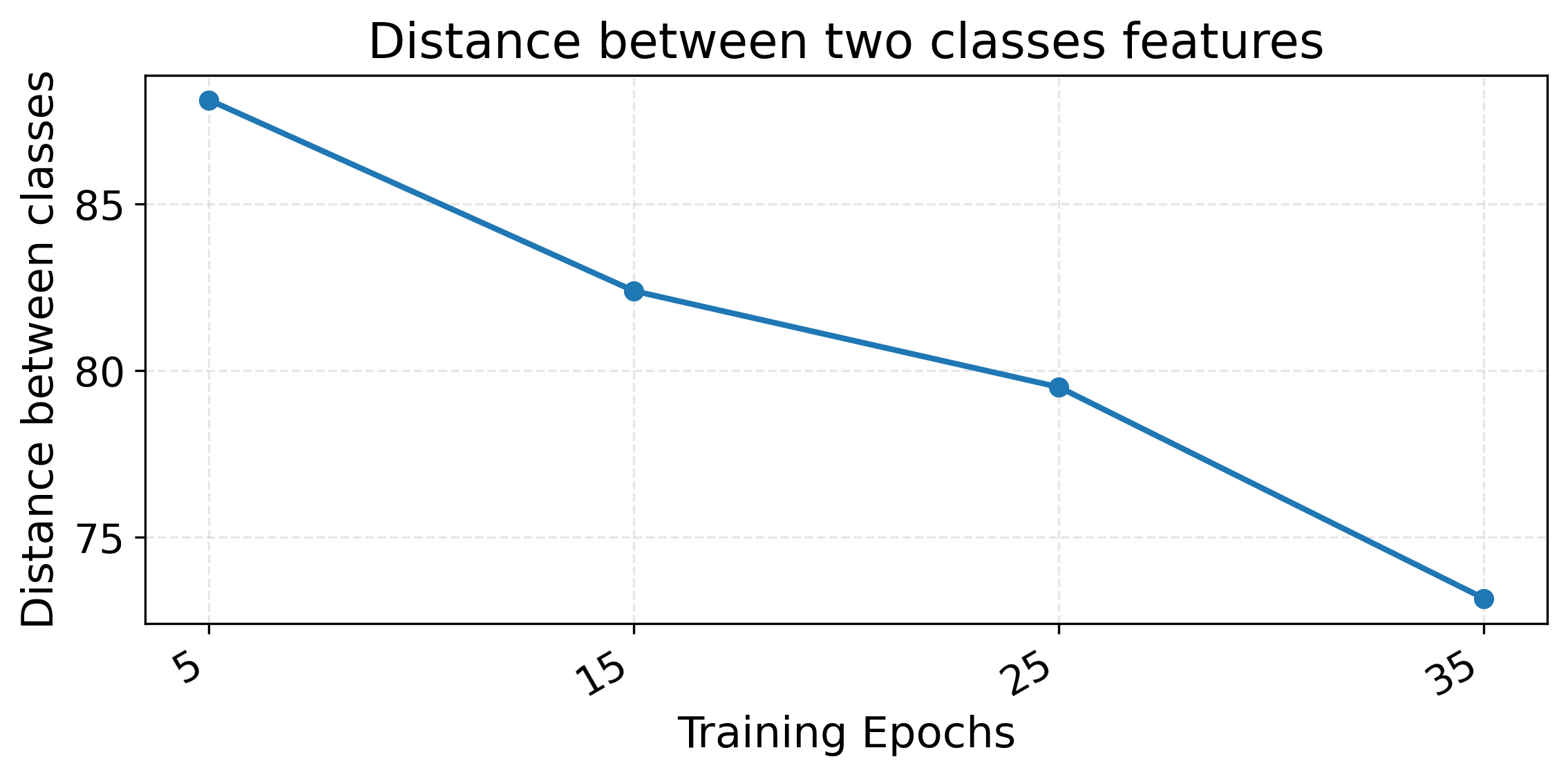} 
    \caption{}
    \label{fig:cluster_dist}
  \end{subfigure}


  \begin{subfigure}[b]{0.35\textwidth}
    \centering
    \includegraphics[width=\linewidth]{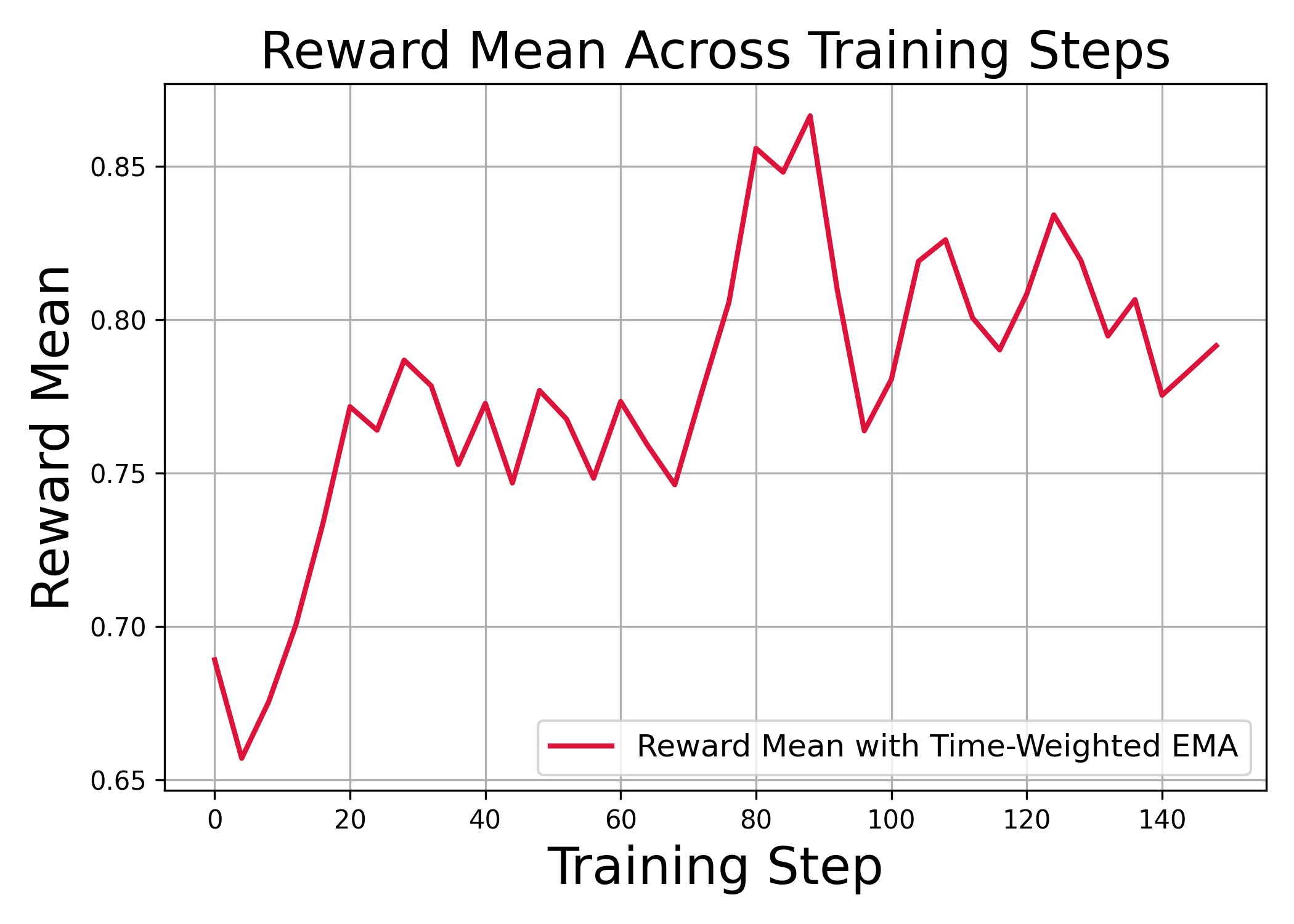}
    \caption{}
    \label{fig:reward_rl}
  \end{subfigure}
  \hfill
  \begin{subfigure}[b]{0.63\textwidth}
    \centering
    \includegraphics[width=\linewidth]{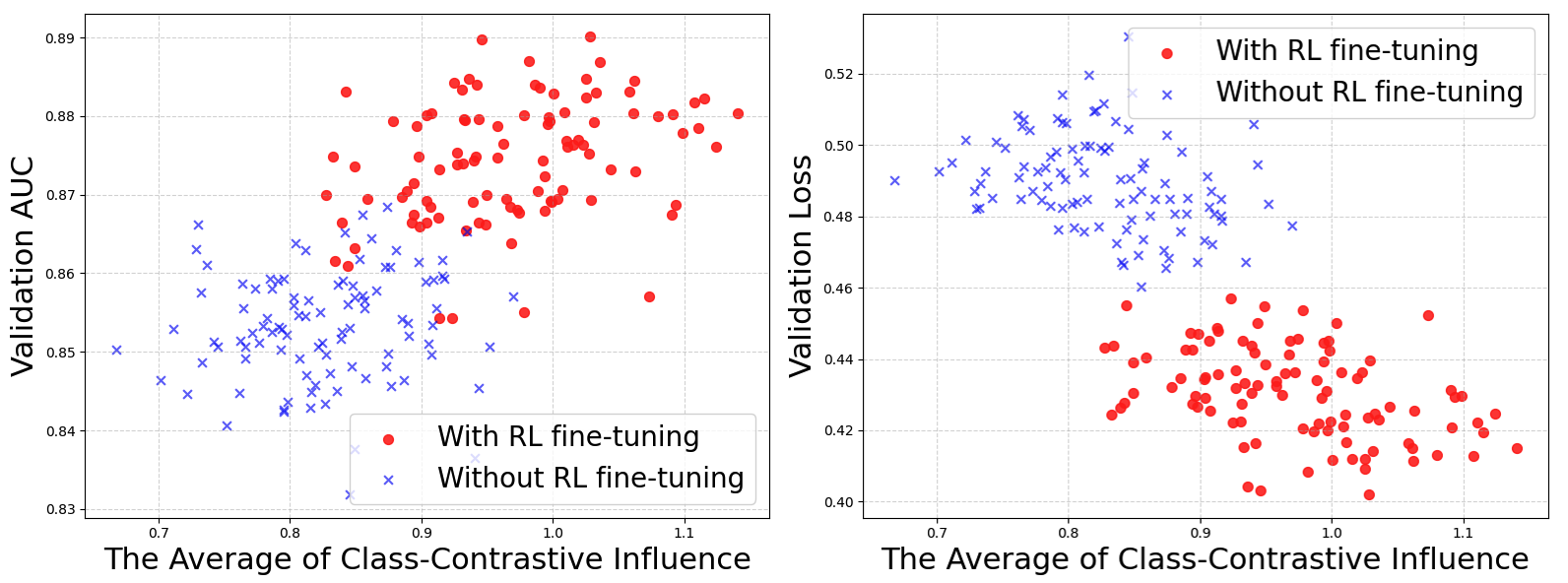}
    \caption{}
    \label{fig:correlation_gad_rl}
  \end{subfigure}
  \end{minipage}
  \end{adjustbox}
  \vspace{-2mm}
  \caption{
  (a)  During RL fine-tuning, increasing class-contrastive influence (C2I) pulls synthetic features closer to the mean of validation features. 
  (b) The inter-class feature distance of synthetic images decreases during RL fine-tuning, indicating greater feature similarity.
  (c) The average C2I reward steadily increases during RL fine-tuning.
  (d) Samples generated after RL fine-tuning exhibit higher C2I, improved AUC, and lower validation loss. 
  }
  \vspace{-1mm}
  \label{fig:rl_four_panel}
\end{figure*}

Motivated by the previous findings, we hypothesize that generating images with high C2I can improve classification performance. To this end, we fine-tune a pre-trained T2I diffusion model within a reinforcement learning (RL) framework that encourages the generation of such samples.

\paragraph{Few-shot learning setup and model preparation.}
We consider a few-shot learning scenario where the training set $\mathcal{D}$ contains only a few labeled samples per class. Both the diffusion model $\epsilon_\theta$ and the classifier $f_\phi$ are trained on $\mathcal{D}$, as detailed in Sec.~\ref{sec:exp}. For the classifier, we save a checkpoint after $a$ epochs, denoted $\phi_a$, which is later used for gradient computation.

\paragraph{RL fine-tuning procedure.}
We precompute validation gradients for each class using the fine-tuned classification model\(f_{\phi_a}\).  
Specifically, we collect the set of projected validation gradients for each class as $
\text{G}_{c}^{\text{val}} = \left\{ \nabla \ell(v ; \phi_a)\big| v \in \mathcal{V}_c \right\}, c=0,1.
$  
During RL fine-tuning, we compute projected gradients of generated samples $\Gamma(x;\phi_a)$ on-the-fly. Then, \(\Gamma(\mathbf{x}; \phi_a),\text{G}_{c}^{\text{val}}\), are used to compute the reward based on C2I. The reward for the \(i\)-th set of generated samples \(\mathbf{x}^i\) is then defined as:
\begin{equation}
r(\mathbf{x}^i, \mathcal{V}; \phi_a) = C2I(\mathbf{x}^i,\mathcal{V}; \phi_a)
= \frac{\left(\mu_0(\mathbf{x}^i, \mathcal{V}_0) - \mu(\mathbf{x}^i, \mathcal{V}_1)\right)^2}
{\sqrt{\sigma^2(\mathbf{x}^i, \mathcal{V}_0) + \sigma^2(\mathbf{x}^i, \mathcal{V}_1)}}.
\label{eq:reward}
\end{equation}

In practice, the same reward is assigned to all samples in each generated $\mathbf{x}^i$. Each minibatch during fine-tuning consists of multiple such sets:
$
\mathbf{x}_{\text{minibatch}} = \left\{ \mathbf{x}_c^1, \dots, \mathbf{x}_c^n, \mathbf{x}_{\bar{c}}^1, \dots, \mathbf{x}_{\bar{c}}^n \right\}.
$
The overall RL objective is defined as the expected reward across the distribution of generated samples:
\begin{equation}
\mathcal{L}_{\text{RL}} = \mathbb{E}_{p_\theta(x)} \left[ r(x, \mathcal{V}; \phi_a) \right].
\end{equation}

\section{Experiments}\label{sec:exp}
\begin{table*}[t]
\centering
\caption{AUC scores across datasets and augmentation methods. + denotes the training data augmentation. P.MNIST denotes PneumoniaMNIST. Bold values indicate the best performance in average.}
\resizebox{0.95\textwidth}{!}{
\begin{tabular}{clccc c}
\toprule
\small
\textbf{Backbone}  & \textbf{Method} & \textbf{\small BreastMNIST} & \textbf{\small DermaMNIST-binary} & \textbf{\small P.MNIST} & \textbf{Avg.} \\
\midrule
\multirow{8}{*}{ViT} & Original only  &  0.828 & 0.846 & 0.941 & 0.873  \\
&+ RandAugment & 0.858 & 0.824 & 0.954  & 0.879 \\
&+ RandomErasing  &   0.873  &    0.839   &    0.945   & 0.885 \\
&+ Mixup	&0.823	&0.845&	0.890&	0.867\\
\cmidrule(lr){2-6}
&+ DataDream  & 0.822 & 0.819 & \textbf{0.958} &  0.866 \\
&+ Dataset Expansion & 0.844 & 0.852 & 0.943 &  0.880\\
&+ DistDiff & 0.764 & 0.805 & 0.938 & 0.784\\
&+ Ours & \textbf{0.885} & \textbf{0.853} & 0.945 & \textbf{0.894} \\
\midrule
\multirow{6}{*}{ResNet18} & Original only   &  0.815 & 0.777 & 0.935 &  0.842 \\
&+ RandAugment  & 0.764 & 0.787 & 0.936 & 0.829 \\
&+ RandomErasing  &  0.758    &  0.747     &  0.900    & 0.802  \\
\cmidrule(lr){2-6}
&+ DataDream  & 0.844 & 0.804 & 0.947 &  0.865 \\
&+ Dataset Expansion & 0.804 & 0.831 & \textbf{0.956} &  0.864\\
&+ Ours & \textbf{0.854} & \textbf{0.836} & \textbf{0.956} & \textbf{0.882}\\
\bottomrule
\end{tabular}
}
\label{tab:auc_results}
\end{table*}
In this section, we evaluate our method on few-shot medical image classification tasks. 

\noindent\textbf{Few-shot setup and model preparation.}
We adopt a few-shot regime with 16 or 32 labeled samples per class for training ($\mathcal{D}$) and use a validation set ($\mathcal{V}$) solely to provide gradient feedback for RL. The classifier $f_\phi$ is a ViT-B/16 pre-trained on ImageNet~\cite{ridnik2021imagenet21k,deng2009imagenet}, and the generator $\epsilon_\theta$ is Stable Diffusion 2.1 (SD)~\cite{rombach2022high}; both models are adapted to $\mathcal{D}$ using LoRA~\cite{hu2022lora}. For the diffusion model, we follow the fine-tuning protocol of \cite{kim2024datadream}, updating LoRA weights on the linear projections within attention layers of both the text encoder and the U\mbox{-}Net.

\noindent\textbf{RL-guided diffusion fine-tuning.}
We perform RL fine-tuning of SD~\cite{black2023training} guided by the C2I reward eq. \eqref{eq:reward}. At each RL step, a ViT-B/16 trained on the few-shot set supplies the gradients used to compute the reward. We select the diffusion checkpoint with the highest average reward within the first 30 epochs and use it to synthesize 500 images per class for augmentation, following \cite{kim2024datadream}. Examples of synthetic images generated by our method are provided in Appendix~\ref{app:main_exp}.

\noindent\textbf{Classifier training.}
To evaluate downstream performance, we train classifiers on datasets augmented by different methods. We assess cross-architecture generalization by training both ViT-B/16 (the backbone used during RL) and ResNet-18 (not used during RL), thereby testing whether RL-guided augmentation transfers to unseen model families. Each model uses standard optimization settings, detailed in Appendix~\ref{app:main_exp}, and the best checkpoint is selected by validation AUC.

\noindent\textbf{Datasets and Evaluation.}
We evaluate our approach on three MedMNIST benchmarks~\cite{medmnistv2}: {\small BreastMNIST}, {\small DermaMNIST} (binary: {\small DermaMNIST\text{-}binary}; multi\mbox{-}class: {\small DermaMNIST\text{-}all}; see Appendix~\ref{app:main_exp} for details), and {\small PneumoniaMNIST}. In all cases, images are processed at a 224×224 resolution to maintain high-fidelity features for the classifier.
To simulate a few-shot setting, we randomly sample 16 labeled examples per class for training (32 per class for {\small BreastMNIST}).
Details of the validation set are provided in Appendix~\ref{app:main_exp}.
We evaluate the classification model under two test settings: (1) using the original clean test images, and (2) using noisy test images. The latter assesses the robustness of the learned decision boundary.
We apply three types of input noise to the test images: salt-and-pepper noise (amount 0.01), Gaussian blur (radius 2), and JPEG compression (quality 25\%). Given the class imbalance across all benchmarks, we report AUC as the primary evaluation metric, as it provides a more robust measure than accuracy.

\noindent\textbf{Baselines.}
We compare against diffusion-based augmentation baselines including DataDream~\cite{kim2024datadream}, Dataset Expansion~\cite{zhang2023expanding}, and DistDiff~\cite{zhu2024distribution} as well as standard augmentation methods including RandAugment~\cite{cubuk2020randaugment}, RandomErasing~\cite{zhong2020random}, and Mixup~\cite{zhang2017mixup}.
\textbf{In addition, we evaluate a simple baseline for generating hard examples, where SD is fine-tuned solely on validation images that were misclassified by the classifier.} Details of the experimental setup and results are provided in Appendix~\ref{app:main_exp}.


\subsection{Results}

\noindent\textbf{Improving classification performance and robustness with augmented training data.}
Table~\ref{tab:auc_results} shows that our method consistently outperforms existing augmentation strategies across datasets and backbones, achieving the higher average AUC overall. Whereas several baselines occasionally underperform relative to using only the original images, our approach reliably improves performance. Notably, the gains with ResNet\mbox{-}18 indicate that the synthesized samples are broadly informative and transfer beyond the backbone used during RL fine\mbox{-}tuning.

\begin{table*}[t]
\centering
\caption{Robustness results (AUC) under different types of noise across three datasets. Bold values indicate the best performance in average.}
\resizebox{0.95\textwidth}{!}{%
\begin{tabular}{llcccc}
\toprule
\textbf{Dataset} & \textbf{Noise Type} & \textbf{Original only} & \textbf{Dataset Exp.} & \textbf{DataDream} & \textbf{Ours} \\
\midrule
\multirow{4}{*}{DermaMNIST-binary}
  & Salt\&Pepper & 0.766 & 0.813 & 0.810 & 0.830 \\
  & JPEG         & 0.806 & 0.800 & 0.821 & 0.831 \\
  & Blur         & 0.827 & 0.848 & 0.828 & 0.841 \\
  \cmidrule(lr){2-6}
  & Avg. & 0.800 & 0.820 & 0.820 & \textbf{0.834} \\
\midrule
\multirow{4}{*}{BreastMNIST} 
  & Salt\&Pepper & 0.764 & 0.772 & 0.817 & 0.832 \\
  & JPEG         & 0.760 & 0.804 & 0.814 & 0.816 \\
  & Blur         & 0.758 & 0.727 & 0.765 & 0.810 \\
  \cmidrule(lr){2-6}
  & Avg. & 0.761 & 0.768 & 0.799 & \textbf{0.819} \\
\midrule
\multirow{4}{*}{PneumoniaMNIST} 
  & Salt\&Pepper & 0.868 & 0.861 & 0.823 & 0.792 \\
  & JPEG         & 0.922 & 0.907 & 0.950 & 0.940 \\
  & Blur         & 0.930 & 0.881 & 0.956 & 0.946 \\
  \cmidrule(lr){2-6}
  & Avg. & 0.907 & 0.883 & \textbf{0.910} & 0.893 \\
\bottomrule
\end{tabular}
}
\label{tab:robustness_auc}
\end{table*}

In addition, we evaluate model robustness under different types of noise. As shown in Table~\ref{tab:robustness_auc}, our method achieves notable improvements in AUC under noisy conditions in both DermaMNIST-binary and BreastMNIST, and showing comparable results with DataDream in PneumoniaMNIST. These results suggest that the generated samples help establish a more stable and generalizable decision boundary. While Dataset Expansion and DataDream offer moderate gains, their performance is less consistent across noise types and datasets.

\noindent\textbf{Generalization to multi-class classification.}
We evaluated our method on the DermaMNIST-all dataset, which contains seven classes, using the multi-class formulation of C2I defined in \eqref{eq:multi_class_inf} in \ref{app:multi_class}. As shown in Table~\ref{tab:derma_multiclass}, applying our approach leads to improved classification accuracy.

\begin{table}[t]
\centering
\caption{Test classification accuracy on DermaMNIST-all (7-class). Baselines: Original (Orig.), Random Erasing (R-E), RandAugment (R-A), and DataDream.}
\label{tab:derma_multiclass}

\footnotesize
\setlength{\tabcolsep}{3pt}
\renewcommand{\arraystretch}{1.0}

\begin{tabular}{lccccc}
\toprule
\textbf{Method} & Orig. & R-E & R-A & DataDream & Ours \\
\midrule
\textbf{Accuracy} & 0.648 & 0.664 & 0.669 & 0.660 & \textbf{0.683} \\
\bottomrule
\end{tabular}
\vspace{-1.5mm}
\end{table}

\begin{table}[t]
  \centering
  \caption{AUC on BreastMNIST with varying RL fine-tuning epochs.}
  \label{tab:breastmnist_auc}
  \small 
  \setlength{\tabcolsep}{8pt} 
  \renewcommand{\arraystretch}{1.1} 
  \begin{tabular}{lccc}
    \toprule
    \textbf{Method} & \textbf{Epoch 10} & \textbf{Epoch 15} & \textbf{Epoch 20} \\
    \midrule
    AUC & 0.83 & 0.86 & 0.89 \\
    \bottomrule
  \end{tabular}
\end{table}

\begin{figure}[t]
  \centering
  \includegraphics[width=0.75\columnwidth]{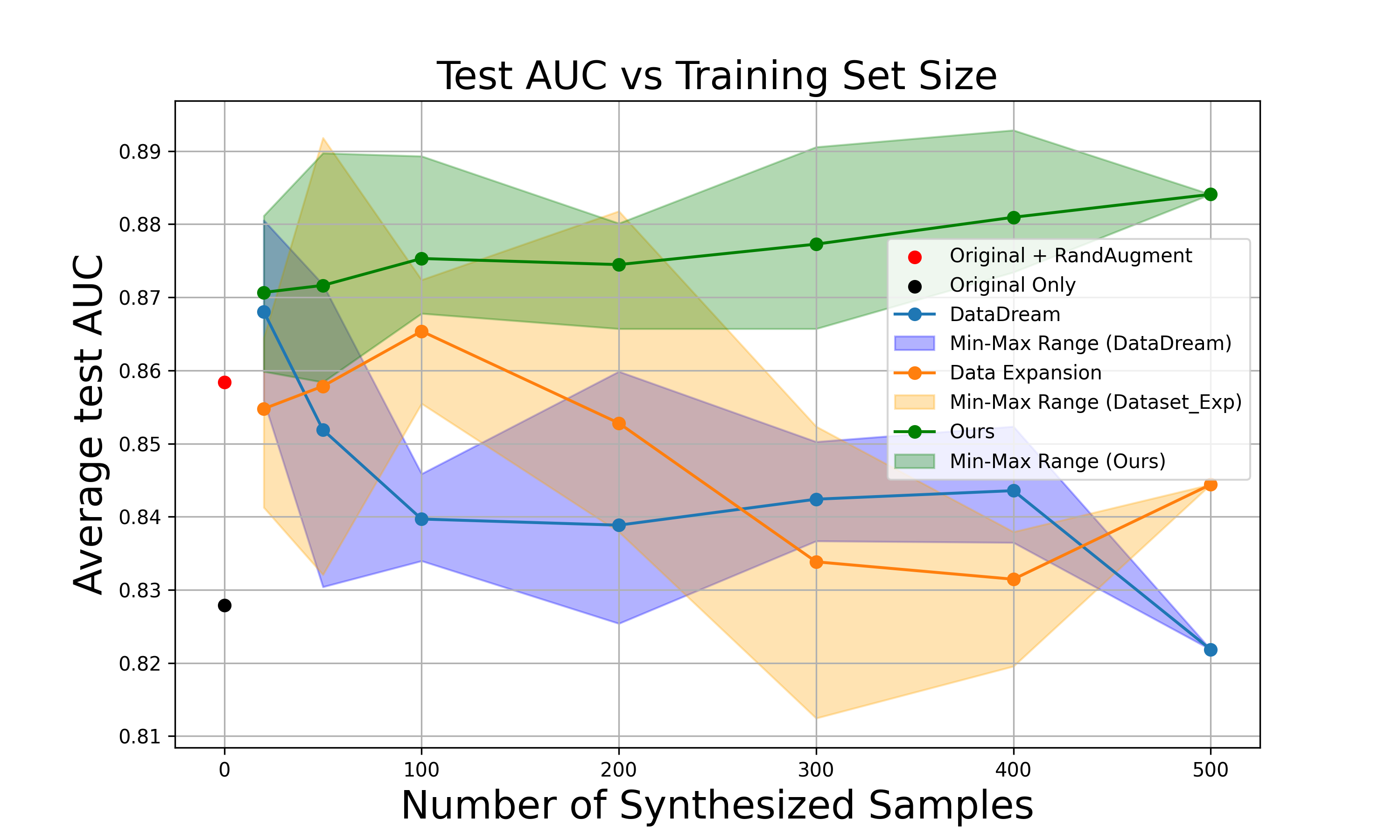}
  \caption{AUC with varying numbers $N$ of synthesized images. Results are averaged over 5 random seeds to ensure statistical significance.}
  \label{fig:synthesized_vary}
\end{figure}





\noindent\textbf{Effect of RL fine-tuning with $C2I$.}
First, we examine how $C2I$ evolves under RL fine-tuning. Figure~\ref{fig:reward_rl} shows a steady increase in the mean reward, indicating that the diffusion model progressively generates samples with higher $C2I$. As training proceeds, the features of diffusion-generated images move closer to the centroids of validation features (Fig.~\ref{fig:centroid}), while the distance between class clusters decreases (Fig.~\ref{fig:cluster_dist}), suggesting the generation of harder examples near the decision boundary. 
Finally, we evaluate whether this augmentation reduces validation loss and improves AUC. Details of the experimental setup are provided in Appendix~\ref{exp:toy_exp_detail}. As shown in Figure~\ref{fig:correlation_gad_rl}, both metrics improve, confirming that our method produces more effective samples.


\noindent\textbf{Effect of RL training epochs on performance.}
Table~\ref{tab:breastmnist_auc} reports the effect of RL fine-tuning duration on classification performance in BreastMNIST. Test AUC consistently increases with the number of RL training epochs, reaching 0.89 at epoch 20. This trend indicates that longer RL fine-tuning enables the diffusion model to generate more informative samples, thereby improving downstream classification performance.

\noindent\textbf{The effect of the number of synthesized images for augmentation.}
The number of synthetic images per class is treated as a hyperparameter. Figure~\ref{fig:synthesized_vary} shows that our method (green) consistently outperforms baseline approaches across all training sizes, achieving higher AUC. 
Its performance steadily improves as more synthesized data are added, demonstrating the effectiveness of our generation strategy. In contrast, DataDream and Dataset Expansion exhibit the opposite trend, adding more synthesized samples leads to a decline in AUC.

\noindent\textbf{Computation cost comparison.}
For cost comparison, all experiments were conducted on a single NVIDIA A100 GPU. RL fine-tuning ran for 30 epochs and required about 5 GPU hours, while classifier training on few-shot examples was lightweight, completing in roughly 10 minutes. Once RL training was complete, image generation incurred no additional cost beyond standard SD sampling. In contrast, Dataset Expansion introduced inference overhead, with each generated image requiring an \textit{additional} ~25 seconds for its test-time optimization procedure.

\section{Related Work}
\label{sec:related_work}
\noindent\textbf{Diffusion models for data augmentation.}
T2I diffusion models are widely used to improve classification by augmenting training data~\cite{du2023dream, azizi2023synthetic, islam2024diffusemix, he2022synthetic, huang2024active, shipard2023diversity, wang2024training, zhang2023expanding, trabucco2023effective, fu2024dreamda}. A common strategy~\cite{zhang2023expanding, trabucco2023effective, fu2024dreamda, islam2024diffusemix, wang2024training} is to add noise to original samples and denoise them with pre-trained diffusion models, thereby enhancing diversity. However, these approaches often target natural images close to the pretraining distribution (e.g., animals~\cite{kim2024datadream, zhang2023expanding, wang2024enhance} or objects~\cite{krause20133d}), limiting effectiveness on out-of-distribution tasks. Others~\cite{kim2024datadream, zhang2023expanding} fine-tune diffusion models on small labeled sets to generate domain-aligned data. In contrast, we fine-tune T2I diffusion models to explicitly improve the utility of generated samples for classification.

\noindent\textbf{Influence estimation from gradients.}
Gradient-based influence estimation is widely used for data selection~\cite{mirzasoleiman2020coresets, wang2020optimizing, pruthi2020estimating}. We follow \cite{pruthi2020estimating}, who approximate training dynamics to estimate a sample’s influence on held-out data. \cite{xia2024less} recently applied this approach to select instruction-tuning data for LLMs, extending it to Adam optimization and LoRA fine-tuning~\cite{hu2022lora}. This method is also compatible with ViT~\cite{dosovitskiy2020image}, which we adopt as our backbone.
Unlike these approaches, we use influence not to select or reweight existing data but to define a reward signal for generating new data.

\noindent\textbf{RL fine-tuning of diffusion models.}
Reinforcement learning (RL) has been explored to fine-tune diffusion models beyond supervised objectives. RL-based methods, such as RLHF, align generative models with user preferences or domain-specific goals~\cite{black2023training, yang2024using, fan2023dpok}.

\section{Conclusion}\label{sec:conclusion}
We investigated how to fine-tune diffusion models to generate more effective training samples for few-shot classification. Our analysis showed that the most useful samples exhibit a large influence gap between two classes: their gradients are aligned with validation samples from the same class and misaligned with others. 
Leveraging this insight, we proposed a reinforcement learning approach using a \textit{Class-Contrastive Influence} reward. Our method effectively improve classification performance across medical imaging tasks. However, our study has certain limitations. Our method introduces additional computational overhead compared to using original training data alone.

\bibliographystyle{splncs04}
\bibliography{main}

\clearpage  
\appendix
\onecolumn
\section{Additional Details on the Proposed Method}
\label{app:theory}

\subsection{Theoretical Evidence for the Opposite-signed Similarities in Figure \ref{fig:two_alignment_dist}}
\label{app:figure3_a_proof}
In this section, we present theoretical evidence to explain the emergence of opposite-signed similarities across different class labels. Specifically, we explore the relationship between the gradient of the loss and the gradient of the feature vector in two different scenarios. 

Consider a binary classification model in which the predicted classification probability of sample $x$ is given by $p(x) = \text{Sigmoid}(w^\top h(x;\theta)) $. 
Here, $w\in \mathbb{R}^d$ is the classifier vector, $h(x;\theta)\in\mathbb{R}^d$ the feature vector corresponding to sample $x$ and depending on parameters $\theta$. We drop dependence on $\theta$ from the notation to reduce clutter.

First, we analyze gradients w.r.t the classifier head $w$ alone. 
For any two samples $x, x'$,  we have the following relation between loss gradients.
\begin{equation}
\nabla_w\ell(x)\cdot \nabla_w\ell(x') = e(x)e(x')h(x)\cdot h(x'),
\end{equation}
where
\begin{equation}
    e(x)=\left\{\begin{array}{cc}
        p(x) &  x\in D_0\\
        p(x) - 1 &  x \in D_1
    \end{array}\right. .
\end{equation}
This shows that if the two samples belong to opposite classes and their features have positive alignment, their loss gradients will be negatively aligned. In contrast, if the samples belong to the same class, their feature alignment has the same sign as their loss gradient alignment. If $h(x)$ is the output of a ReLU activation layer, as in the case of ResNet (cf. GeLU used in ViT), the features of any two samples will tend to be positively aligned, regardless of their class labels.  As a result, the gradient alignment (w.r.t the classifier head) will agree with Figure \ref{fig:two_alignment_dist}.

We now extend this analysis to gradients with respect to the parameters of the final layer in a ReLU-based feature extractor. Consider a feature extractor defined as:
\begin{equation}
h_{i}(x) = \text{ReLU}(\sum_{j}W_{ij} h^{(-1)}_{j}(x)),
\end{equation}
and $h^{(-1)}$ is the representation in the next-to-last layer.

Gradient of the feature vector w.r.t $W$ takes the following form:
\begin{equation}\label{feature grad}
\nabla_Wh_i(x)\equiv \frac{\partial h_i(x)}{\partial W_{jk}} = \delta_{ij} h^{(-1)}_{k}(x)\Theta\big( h_i(x)\big),
\end{equation}
where $\Theta$ is the step function.
 For any two samples $x, x'$, the inner product between loss gradients will be
\begin{equation}\label{last layer grad dot}
\nabla_W\ell(x)\cdot \nabla_W\ell(x') =
e(x)e(x')\sum_{jk}\frac{\partial(w^\top h(x))}{\partial W_{jk}} \frac{\partial (w^\top h(x'))}{\partial W_{jk}}.
\end{equation}
According to Eq.~\eqref{feature grad},
\begin{equation}\label{next}
    \frac{\partial(w^\top h(x))}{\partial W_{jk}}=w_jh^{(-1)}_k(x)\Theta\big( h_i(x)\big),
\end{equation}
so that the dot product will be
\begin{equation}
\frac{\partial(w^\top h(x))}{\partial W_{jk}}\cdot \frac{\partial (w^\top h(x'))}{\partial W_{jk}}=(w_j)^2h^{(-1)}_k(x)h^{(-1)}_k(x')\Theta\big( h_i(x)\big)\Theta\big( h_i(x')\big) \geq 0.
\end{equation}
Using this result in Eq. \eqref{last layer grad dot}, it is clear that $\nabla_W\ell(x)\cdot \nabla_W\ell(x')$ is positive for same-class samples and negative for opposite-class samples.
 This result further confirms the observation of opposite-signed similarities across different classes in Figure \ref{fig:two_alignment_dist}.

\subsection{Proof of Theorem \ref{theo:log-regx}}\label{app:proof-logreg}
In this section, we provide proofs for Theorem \ref{theo:log-regx} in Section \ref{sec:hard_example} and exhibit more results.
\setcounter{theorem}{0}
\setcounter{lemma}{0}
\begin{lemma}\label{lemma1}
  In a Logistic Regression model with output probabilities $p(x)=\text{Sigmoid}(w^\top x),\, x,w\in\mathbb{R}^d$  and cross-entropy loss $\ell(x)$, we have
  \begin{equation}
\cos\big(\nabla\ell(x),\nabla\ell(x')\big)=s(x)s(x')\, \cos(x,x'),
  \end{equation}
  with $s(x) = +1$ if $x$ is in class 0 and $s(x)=-1$ if in class 1.
\end{lemma}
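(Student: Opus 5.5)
The plan is a direct computation of the logistic-regression gradient. Write $z=w^\top x$ and $p(x)=\sigma(z)$. With the label convention used in the paper's analysis, the cross-entropy loss is $\ell(x)=-\log(1-p(x))$ for class $0$ and $\ell(x)=-\log p(x)$ for class $1$. These conventions are consistent with $e(x)$ in Appendix~\ref{app:figure3_a_proof}: class $0$ corresponds to $p(x)$, class $1$ to $p(x)-1$.

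Using $\sigma'(z)=\sigma(z)(1-\sigma(z))$ and the chain rule, I will derive the gradient with respect to $w$:
\[
\nabla\ell(x)=e(x)\,x,\qquad e(x)=\begin{cases}p(x), & x\in D_0,\\ p(x)-1, & x\in D_1.\end{cases}
\]
This is the same identity stated in Appendix~\ref{app:figure3_a_proof} with $h(x)=x$, so that computation can be reused verbatim.

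The key observation is about the sign of $e(x)$. Because $p(x)\in(0,1)$ strictly, $e(x)$ is strictly positive for class $0$ and strictly negative for class $1$. Hence $e(x)=s(x)\,|e(x)|$ with $|e(x)|>0$.

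Now substitute into the cosine:
\[
\cos\big(\nabla\ell(x),\nabla\ell(x')\big)=\frac{e(x)e(x')\,x\cdot x'}{|e(x)|\,\|x\|\,|e(x')|\,\|x'\|}=s(x)s(x')\cos(x,x').
\]
The positive magnitudes $|e(\cdot)|$ cancel, leaving only the signs.

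There is no real obstacle here. The two points needing care are:
\begin{itemize}
\item The loss convention must be fixed so that it matches the paper's $e(x)$. This is what makes class $0$ carry the $+$ sign, consistent with the definition of $s$.
\item The cosine is defined only when $x,x'\neq 0$; the strict positivity of $|e|$, from $0<p<1$, then guarantees the gradients are nonzero.
\end{itemize}
I would state the assumption $x,x'\neq0$ explicitly.
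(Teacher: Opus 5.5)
Your proposal is correct and follows essentially the same route as the paper's proof. Both compute $\nabla_w\ell(x)=e(x)\,x$, observe that the sign of $e(x)$ is exactly $s(x)$, and cancel the positive magnitudes in the cosine; your remark that $0<p(x)<1$ forces $|e(x)|>0$, so only $x,x'\neq 0$ is needed, slightly sharpens the paper's blanket assumption that the gradients are nonzero.
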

\begin{proof}
    We begin by computing the gradient of the cross-entropy loss \(\ell(x)\) for a sample \(x\) with label \(y \in \{0,1\}\). Using \(p(x) = \text{Sigmoid}(w^\top x)\), we have $\nabla_w \ell(x) = e(x) x$, where
\begin{equation}
e(x) = 
\begin{cases}
p(x), & \text{if } x \in D_0, \\
p(x) - 1, & \text{if } x \in D_1.
\end{cases}
\end{equation}

The inner product of gradients between sample \(x\) and a validation sample \(v\) is:
\begin{equation}
\nabla \ell(x) \cdot \nabla \ell(v) = e(x) e(v) \,x \cdot v.
\end{equation}
Define the class sign function as:
\[
s(x) = 
\begin{cases}
+1, & x \in D_0, \\
-1, & x \in D_1.
\end{cases}
\]
Assuming the gradients are nonzero, the cosine similarity is:
\begin{equation}
\cos\big(\nabla\ell(x),\nabla\ell(x')\big) = \frac{\nabla \ell(x) \cdot \nabla \ell(v)}{\|\nabla \ell(x)\| \|\nabla \ell(v)\|}
= s(x)s(v) \frac{x \cdot v}{\|x\|\|v\|}.
\end{equation}
\end{proof}

If $x$ and $x'$ belong to opposite classes but exhibit strong feature alignment, their loss gradients will be highly dissimilar, resulting in a larger $C2I$. \textit{In other words, when samples from different classes share similar features (reflected through their gradient alignment), we observe an increase in $C2I$.}

In what follows, we demonstrate that in the case of logistic regression, maximizing $C2I$ induces a feature averaging effect. This, in turn, generates samples that lie nearer to the decision boundary, making them more challenging to classify.

Note that we present results in terms of the alignment gap $|\mu_0-\mu_1|$ instead of $C2I$ so that the calculations are simpler and easier to interpret\footnote{Finding the optimal solution with $C2I$ leads to finding the roots of a cubic polynomial, which although analytically solvable, gives little insight about the nature of the solution.}.

\begin{theorem}
Consider a Logistic Regression model with output probabilities $p(x)=\text{Sigmoid}(w^\top x),\, x,w\in\mathbb{R}^d$  and cross-entropy loss $\ell(x)$. 
 Let \(V_0\) and \(V_1\) be  validation sets containing \(N/2\) samples from class 0  and  class 1 respectively and $V=V_0\cup V_1$ (for even $N$). consider the cosine similarity between the loss gradients of a sample \(x\) and a validation sample \(v_i\) as:
\begin{equation}
\mathcal{A}(x, v_i) = \frac{\nabla \ell(x) \cdot \nabla \ell(v_i)}{\|\nabla \ell(x)\| \, \|\nabla \ell(v_i)\|},
\end{equation}
and define $
\mu_c(x) = \frac{1}{N} \sum_{i \in V_c} \mathcal{A}(x, v_i)$. Then the sample \(x\) that maximizes the absolute class-alignment gap \(|\mu_0(x) - \mu_1(x)|\) is given by  the convex combination
 \begin{equation}
x^\star = \sum_{v_i\in V} \alpha_i v_i,\qquad \alpha_i =\frac{1/\|v_i\|}{\sum_{v_j\in V}1/\|v_j\|}.
\end{equation}  
\end{theorem}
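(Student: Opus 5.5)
Using Lemma~\ref{lemma1}, the influence of $x$ on $v_i$ is $\mathcal{A}(x,v_i)=s(x)s(v_i)\cos(x,v_i)$, and $s(v_i)=+1$ on $V_0$, $-1$ on $V_1$. Hence
\begin{equation*}
\mu_0(x)-\mu_1(x)=\frac{s(x)}{N}\Bigl(\sum_{v_i\in V_0}\cos(x,v_i)+\sum_{v_i\in V_1}\cos(x,v_i)\Bigr)=\frac{s(x)}{N}\sum_{v_i\in V}\frac{x\cdot v_i}{\|x\|\,\|v_i\|}.
\end{equation*}
The two class sums combine with the same sign: a same-class term counts positively in $\mu_{s}$, and an opposite-class term enters $\mu_{\bar s}$ negatively and is then subtracted. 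So the first step is to record that the absolute gap is
\begin{equation*}
|\mu_0(x)-\mu_1(x)|=\frac{1}{N}\Bigl|\frac{x}{\|x\|}\cdot u\Bigr|,\qquad u\triangleq\sum_{v_i\in V}\frac{v_i}{\|v_i\|}.
\end{equation*}
This expression no longer depends on the label of $x$ or on $w$. We assume all gradients are nonzero, i.e.\ $x,v_i\neq 0$ and $p\notin\{0,1\}$, so that the cosines are defined.

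The second step is a Cauchy--Schwarz argument over unit vectors. The quantity depends only on the direction $\hat x=x/\|x\|$, and $|\hat x\cdot u|\le\|u\|$, with equality iff $\hat x=\pm u/\|u\|$ (assuming $u\neq0$). The maximizers are therefore exactly the nonzero multiples of $u$.

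The final step chooses the normalization $\lambda=1/\sum_j 1/\|v_j\|$. With it, $\lambda u=\sum_i\alpha_i v_i$, where $\alpha_i=\frac{1/\|v_i\|}{\sum_j1/\|v_j\|}$ is positive and the $\alpha_i$ sum to one. This is the convex combination $x^\star$ in the statement.

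The main obstacle is not analytic but one of interpretation. The objective is scale-invariant, so the maximizer is unique only up to positive (indeed nonzero) rescaling. I would state explicitly that $x^\star$ is the representative of the optimal ray lying in the convex hull of $V$, and note that $-x^\star$ attains the same value. The nondegeneracy assumption $u\neq0$ should be made explicit; if $u=0$ the gap vanishes for every $x$.

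I would also check carefully the sign bookkeeping that makes the two class sums add rather than cancel. This is the crux linking the statement to Lemma~\ref{lemma1}. It is worth verifying by taking $x\in D_0$ and $x\in D_1$ separately, and confirming that the factor $s(x)$ disappears under the absolute value.
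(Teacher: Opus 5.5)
Your proof is correct. Its first step, using Lemma~\ref{lemma1} to reduce the gap to $\frac{1}{N\|x\|}\bigl|x\cdot\sum_i v_i/\|v_i\|\bigr|$, is exactly the paper's. The optimization step is where the two routes differ.

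The paper breaks the scale invariance by assuming the bias is absorbed into $w$. Then every input, including $x$ and the $v_i$, has the form $(\tilde x,1)$, and the objective becomes $f(\tilde x)=(\tilde x\cdot a+\beta)^2/(\|\tilde x\|^2+1)$. Setting the derivative to zero gives $\tilde x^\star=a/\beta$, and its augmented form is the convex combination. You instead leave $x$ unconstrained, note that the gap depends only on $\hat x$, and apply Cauchy--Schwarz.

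Your route buys a genuine global-optimality certificate and an exact description of all maximizers. The paper only exhibits a stationary point and never checks that it is the maximum. It is the maximum, but showing that needs an argument like yours, applied to $(\tilde x,1)$ and $(a,\beta)$.

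The paper's route buys uniqueness and removes the arbitrariness you flag. Under the homogeneous-coordinate convention, $u$ has last coordinate $\sum_j 1/\|v_j\|>0$, so $u\neq0$ holds automatically. Moreover, the only maximizer with last coordinate $1$ is $\lambda u$ with $\lambda=1/\sum_j 1/\|v_j\|$, which is precisely your normalization, and $-x^\star$ is excluded. So rather than presenting $x^\star$ as ``the representative in the convex hull,'' you can invoke this convention to show your $\lambda$ is forced. That makes the statement's phrasing ``the sample $x$ that maximizes'' literally true.

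One small point: your assumption $p\notin\{0,1\}$ is automatic, since the sigmoid of a finite logit always lies in $(0,1)$.
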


\begin{proof}

The mean cosine similarity to each class is:
\[
\mu_0(x) = \frac{1}{N} \sum_{v_i \in D_0} A(x, v_i), \qquad
\mu_1(x) = \frac{1}{N} \sum_{v_i \in D_1} A(x, v_i).
\]

Using Lemma \ref{lemma1}, the alignment gap is:
\begin{align}
|\mu_0 - \mu_1|
&= \left| \frac{1}{N} \sum_{v_i\in D_0} A(x, v_i)-\sum_{v_i\in D_1} A(x, v_i) \right| = \left| \frac{1}{N} \sum_{v_i\in D} s(v_i) A(x, v_i)\right|\nonumber \\
&= \left| \frac{1}{N} \sum_{v_i\in D} s(x)s(v_i)^2\frac{x \cdot v_i}{\|x\|\|v_i\|}\right|=
\frac{1}{N}\frac{1}{\|x\|}\left|  \sum_{v_i\in D} \frac{x \cdot v_i}{\|v_i\|}\right|,
\end{align}
where we used that \(s(x)^2 = 1\). To find the vector $x$ that maximizes the quantity above, note that the bias parameter is absorbed into $w$, so that $x$ has the form:
\begin{equation}\label{x form}
    x = \left[\begin{array}{c}
        \tilde{x}\\1
    \end{array}\right].
\end{equation}
Therefore,
\begin{equation}
    f(\tilde{x})\equiv (\mu_0 - \mu_1)^2 =\frac{(\tilde{x}\cdot a+\beta)^2}{\|\tilde{x}\|^2+1},
\end{equation}
in which
\begin{equation}
    a = \frac{1}{N}\sum_{{v}_i\in D} \frac{\tilde{v}_i}{\sqrt{\|\tilde{v}_i\|^2+1}},\qquad     \beta = \frac{1}{N}\sum_{{v}_i\in D} \frac{1}{\sqrt{\|\tilde{v}_i\|^2+1}}.
\end{equation}
The maximum of $f(\tilde{x})$ can be found by setting the first derivative to zero:
\begin{equation}
    \tilde{x}^\star = \frac{a}{\beta}.
\end{equation}
Defining
\begin{equation}
    \alpha_i = \frac{1}{\beta}\frac{1}{\sqrt{\|\tilde{v}_i\|^2+1}},
\end{equation}
we have
\begin{equation}
    \tilde{x}^\star = \sum_{{v}_i\in D}\alpha_i\tilde{v}_i.
\end{equation}
For all $v_i$, since the last dimension is equal to 1 according to Eq. \eqref{x form}, and since $\sum_i\alpha_i =1$, it follows that the last dimension of $x^\star$ is one too. 
Thus, we can write the final result in terms of $x$ and $v_i$ as:
\begin{equation}
    {x}^\star = \sum_{{v}_i\in D}\alpha_i{v}_i,\qquad \alpha_i = \frac{1/\|v_i\|}{\sum_{v_j\in D}1/\|v_j\|}.
\end{equation}

\end{proof}

To interpret this result, let us consider a case where $v_i = \bar{v}+\delta v_i$ and $\underset{i}{\max}\|\delta v_i\|/\|\bar{v}\| = \varepsilon$ for some $0<\varepsilon<1$. This condition describes a situation where deviations from the mean vector are smaller than the magnitude of the mean vector. In this case,
\begin{equation}
    \frac{1}{\|v_i\|} = \frac{1}{\|\bar{v}\|}+\frac{\delta v_i\cdot\bar{v}}{\|\bar{v}\|^2}+\mathcal{O}(\varepsilon^2).
\end{equation}
As a result,
\begin{equation}
    \alpha_i = \frac{\frac{1}{\|\bar{v}\|}+\frac{\delta v_i\cdot\bar{v}}{\|\bar{v}\|^3}+\mathcal{O}(\varepsilon^2)}{\frac{N}{\|\bar{v}\|}+\frac{\sum_i\delta v_i\cdot\bar{v}}{\|\bar{v}\|^3}+\mathcal{O}(\varepsilon^2)}=\frac{1}{N}+\frac{\delta v_i\cdot\bar{v}}{\|\bar{v}\|^2}+\mathcal{O}(\varepsilon^2).
\end{equation}
We can approximate $x^\star$ as follows.
\begin{equation}
    x^\star = \sum_{{v}_i\in D}\alpha_i{v}_i=\bar{v}+\sum_{{v}_i\in D}\frac{(\bar{v}+\delta v_i)\delta v_i\cdot\bar{v}}{\|\bar{v}\|^2}+\mathcal{O}(\varepsilon^2)=\bar{v}+\mathcal{O}(\varepsilon^2).
\end{equation}
Here, we used the fact that the contribution of $\bar{v}$ in the numerator cancels out when summed over $v_i$, and we are left with two factors of $\delta v_i$ which is $\mathcal{O}(\varepsilon^2)$. Therefore, the deviation of $x^\star$ from the dataset average is small. This result helps us in interpreting $x^\star$ in terms of ``hard examples''. In the following, we show that dataset average has a high classification loss because it is closer to the decision boundary than each cluster average.
\begin{lemma}\label{mean loss}
Let a binary logistic regression model predict the probability of class 1 via
\[
\hat{p}(x) = \text{Sigmoid}(w^\top x ) ,
\]
for \( w \in \mathbb{R}^d \).
Let \( \mu_0 \) and \( \mu_1 \) denote the means of the class-0 and class-1 inputs, respectively, and assume
\[
\hat{p}(\mu_0) = \frac{1}{2} - \varepsilon_0, \quad
\hat{p}(\mu_1) = \frac{1}{2} + \varepsilon_1,
\]
for some \( 0<\varepsilon_0, \varepsilon_1 <1/2 \). Let \( \bar{\mu} = \pi \mu_1 + (1 - \pi) \mu_0 \) be the overall dataset mean for class prior \( \pi \in (0, 1) \).

Then the model's predicted probability of the correct label at \( \bar{\mu} \) is strictly less than at either class mean:
\begin{equation}
\Pr(\hat{y} = 1 \mid \bar{\mu}) < \Pr(\hat{y} = 1 \mid \mu_1), \quad
\Pr(\hat{y} = 0 \mid \bar{\mu}) < \Pr(\hat{y} = 0 \mid \mu_0),
\end{equation}
and the classification loss is lower bounded by
\begin{equation}
    \ell(\bar{\mu})> \log\left(\frac{2}{1+2\max\left(\varepsilon_0,\varepsilon_1\right)}\right),\qquad 0<\varepsilon_0,\varepsilon_1<\frac{1}{2}.
\end{equation}
\end{lemma}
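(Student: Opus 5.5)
The proof rests on one observation: the logit $z(x)=w^\top x$ is linear in $x$, and the sigmoid is strictly increasing. So the whole question reduces to comparing three real numbers. Write $z_0=w^\top\mu_0$ and $z_1=w^\top\mu_1$. The assumptions $\hat p(\mu_0)=\tfrac12-\varepsilon_0<\tfrac12$ and $\hat p(\mu_1)=\tfrac12+\varepsilon_1>\tfrac12$ then give $z_0<0<z_1$, since $\varepsilon_0,\varepsilon_1>0$. By linearity,
\begin{equation*}
z(\bar\mu)=w^\top\bar\mu=\pi z_1+(1-\pi)z_0 .
\end{equation*}
This holds just as well if a bias is absorbed into $w$ as in the proof of Theorem~\ref{theo:log-regx}, because the weights $\pi,1-\pi$ sum to one. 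Since $\pi\in(0,1)$ and $z_0<z_1$, this is a strict convex combination, so $z_0<z(\bar\mu)<z_1$.

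The two probability inequalities follow by applying the strictly increasing sigmoid to this chain. First, $z(\bar\mu)<z_1$ gives $\hat p(\bar\mu)<\hat p(\mu_1)$, which is $\Pr(\hat y=1\mid\bar\mu)<\Pr(\hat y=1\mid\mu_1)$. Second, $z(\bar\mu)>z_0$ gives $\hat p(\bar\mu)>\hat p(\mu_0)$. Equivalently, $1-\hat p(\bar\mu)<1-\hat p(\mu_0)$, which is $\Pr(\hat y=0\mid\bar\mu)<\Pr(\hat y=0\mid\mu_0)$.

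For the loss bound, $\bar\mu$ has no intrinsic label, so I would bound the cross-entropy for either label assignment at once. The previous step gives
\begin{equation*}
\hat p(\bar\mu)<\tfrac12+\varepsilon_1,\qquad 1-\hat p(\bar\mu)<\tfrac12+\varepsilon_0 .
\end{equation*}
Hence the probability assigned to whichever label is correct at $\bar\mu$ is strictly less than $\tfrac12+\max(\varepsilon_0,\varepsilon_1)=\tfrac{1+2\max(\varepsilon_0,\varepsilon_1)}{2}$. This quantity is below $1$ because $\varepsilon_0,\varepsilon_1<\tfrac12$. Since $-\log$ is strictly decreasing, $\ell(\bar\mu)>\log\bigl(2/(1+2\max(\varepsilon_0,\varepsilon_1))\bigr)$.

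The only delicate point is interpretive rather than technical: the statement does not specify the label used in $\ell(\bar\mu)$. I would state explicitly that the bound holds for both labels, which the argument above covers. I would also note that strictness comes from $\pi\in(0,1)$ and from $\varepsilon_0,\varepsilon_1>0$.
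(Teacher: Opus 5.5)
Your proof is correct and follows the paper's argument essentially step for step. Like the paper, you reduce to the logits $z_0<0<z_1$, note that $w^\top\bar\mu$ is a strict convex combination of them, and apply strict monotonicity of the sigmoid. You then bound the cross-entropy under either label assignment; the paper does this as an explicit case split and takes the minimum, which is exactly your bound via $\max(\varepsilon_0,\varepsilon_1)$.
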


\begin{proof}

Let \( z_0 = w^\top\mu_0 \), \( z_1 = w^\top\mu_1 \), and \( \delta = w^\top\bar{\mu} = \pi z_1 + (1 - \pi) z_0 \), where $0<\pi<1$ is the ratio of class 1 number of samples to the size of the whole dataset. By assumption,
\begin{align}
\hat{p}(\mu_0) &= \text{Sigmoid}(z_0) = \frac{1}{2} - \varepsilon_0 \quad \Rightarrow \quad z_0 < 0.\\
\hat{p}(\mu_1) &= \text{Sigmoid}(z_1) = \frac{1}{2} + \varepsilon_1 \quad \Rightarrow \quad z_1 > 0.
\end{align}
This assumption means that the model has a roughly correct guess about the class, as is the case with pretrained models. Since \( \delta \) is a strict convex combination of \( z_0 \) and \( z_1 \), we have
$
z_0 < \delta < z_1.
$ 
By strict monotonicity of the sigmoid function, it follows that $
\text{Sigmoid}(z_0) < \text{Sigmoid}(\delta) < \text{Sigmoid}(z_1)$ ,
i.e.,
\begin{equation}
\frac{1}{2} - \varepsilon_0 < \hat{p}(\bar{\mu}) < \frac{1}{2} + \varepsilon_1.
\end{equation}
From the inequality above, we arrive at the final conclusion about model confidence
\begin{align}
\Pr(\hat{y} &= 1 \mid \bar{\mu}) = \hat{p}(\bar{\mu}) < \hat{p}(\mu_1) = \Pr(\hat{y} = 1 \mid \mu_1),
\\
\Pr(\hat{y} &= 0 \mid \bar{\mu}) = 1 - \hat{p}(\bar{\mu}) < 1 - \hat{p}(\mu_0) = \Pr(\hat{y} = 0 \mid \mu_0).
\end{align}

Therefore, the prediction at the mixture mean \( \bar{\mu} \) is strictly less confident than at either class mean. To find the lower bound for the loss, note that if a data point at $\bar{\mu}$ has label $y=0$, then its cross-entropy loss lower bounded by
\begin{equation}
    \ell(\bar{\mu};y=0)=-\log(1-\hat{p}(\bar{\mu})) >-\log(\frac{1}{2}+\varepsilon_0).
\end{equation}
Otherwise, if a data point at $\bar{\mu}$ has label $y=1$, then its cross-entropy loss lower bounded by
\begin{equation}
    \ell(\bar{\mu};y=1)=-\log(\hat{p}(\bar{\mu})) >-\log(\frac{1}{2}+\varepsilon_1).
\end{equation}
Therefore
\begin{equation}
     \ell(\bar{\mu}) >\min\left(-\log(\frac{1}{2}+\varepsilon_0),-\log(\frac{1}{2}+\varepsilon_1)\right)=\log\left(\frac{2}{1+2\max(\varepsilon_0,\varepsilon_1)}\right).
\end{equation}
\end{proof}

In the following, we consider extensions to more general cases.

\subsection{Extension to Multi-classification Tasks}
\label{app:multi_class}
Although we have laid out our method in a binary classification setting, its generalization to multi-class problems is straightforward. We define the Class-Contrastive Influence as the softmax over mean influence functions defined in  \eqref{eq:multi_class_inf}. 
Specifically, if indexes $a$ and $b$ are classes that the training sample $x$ and the validation sample $v$ belong to, the average influence score is
\begin{equation}
\mu_{ab}(\mathbf{x}_a,\mathcal{V}_b)\triangleq \frac{1}{N} \sum_{j=1}^{N} \mathcal{A}^{\phi}(\mathbf{x}_a, v_b^{ j}).
\end{equation}
We define the Class-Contrastive Influence for a training sample $x_a$ belonging to class $a$ as:
\begin{equation}
    C2I(x_a) = \text{Softmax}(\mu_{ab})=\frac{\exp({\mu}_{aa})}{\sum_{b} \exp({\mu}_{ab})}.
\label{eq:multi_class_inf_appendix}
\end{equation}
This quantity will be our reward for the  RL fine-tuning of the diffusion model. 

This formulation reduces in the special case of binary classification to a similar formula as C2I in \eqref{eq:d_align}. 
Note that in the binary case,
\begin{equation}
C2I(x_a) = \text{Sigmoid}(\mu_{aa}-\mu_{ab}),
\end{equation}
which is consistent with maximizing the influence gap as proposed in the paper.  Our experiments show that this reward function leads to a stable RL optimization.

\subsection{Analyzing $x^*$ of $C2I$ in general feature extractors}
Consider a binary classification problem with model 
\begin{equation}
p(x) = \operatorname{sigmoid}(z(x;\phi)),\qquad
z(x;\phi) = w^\top h(x;\phi).
\end{equation}

Then, following similar steps as in the proof for Theorem~\ref{theo:log-regx}, we find that
\begin{align}
\bigl|\mu_0 - \mu_1\bigr|
&= \left| \frac{1}{N} \sum_{v_i\in \mathcal{D}} s(v_i)\,\mathcal{A}^{\phi}(x, v_i)\right| \nonumber\\
&= \frac{1}{N}\frac{1}{\|\nabla z(x;\phi)\|}
\left| \nabla z(x;\phi) \cdot \sum_{v_i\in \mathcal{D}}
\frac{\nabla z(v_i;\phi)}{\|\nabla z(v_i;\phi)\|}\right|.
\label{eq:general_optimal}
\end{align}

The influence gap $|\mu_0-\mu_1|$ is maximized if the gradient of the logit $z(x;\phi)$ is aligned with the average of $z(v_i;\phi)$ over the validation set. This result obviously reduces to the statement in Theorem \ref{theo:log-regx} where in Logistic Regression we have $z=w^\top x$. For a general model, finding the input $x^\star$ with the maximal influence gap is analytically intractable, as it involves the gradient of $z$ w.r.t. to input $x$. That said, an upper bound for $|\mu_0-\mu_1|$ will be realized if
 \begin{equation}
     \nabla z(x;\phi)\propto \sum_{v_i\in D}\frac{\nabla z(v_i;\phi)}{\|\nabla z(v_i;\phi)\|}.
 \end{equation}

\section{Experimental Setup and Additional Results}\label{app:exp}

\subsection{Experiment details for Figure~\ref{fig:correlation_gad}, Figure~\ref{fig:correlation_gad_rl} and Figure~\ref{fig:Inf_correlation}.}
\label{exp:toy_exp_detail}

We first generate synthetic images and create 10 distinct sets for each class, each composed of 20 images. Then, we construct a total of \(10 \times 10 = 100\) training sets. Each set is combined with the few-shot original samples used to fine-tune Stable Diffusion (SD), and we train a classification model on each combination. We log both the validation loss and validation AUC at the iteration that achieves the lowest validation loss.
When computing \(\Gamma\) in Eq.~\ref{eq:inf}, we use a batch size of 20, ensuring that all samples belong to the same class. In contrast, for computing \(\nabla \ell\) in Eq.~\eqref{eq:inf}, we use a batch size of 1. 
We observe that \(C2I\) is positively correlated with the validation AUC and negatively correlated with the validation loss, as shown in Figure~\ref{fig:correlation_gad}. In contrast, \textit{influence} does not exhibit such correlations, as also shown in Figure~\ref{fig:Inf_correlation}. After RL fine-tuning, we generate new images using the RL-optimized SD model and repeat the procedure described above. Figure~\ref{fig:correlation_gad_rl} demonstrates that RL fine-tuning increases \(C2I\), leading to higher validation AUC and lower validation loss.

\subsection{Direct Application of Influence Score to Classification Task}
 \begin{figure}[h]
    \centering
    \includegraphics[width=0.5\linewidth]{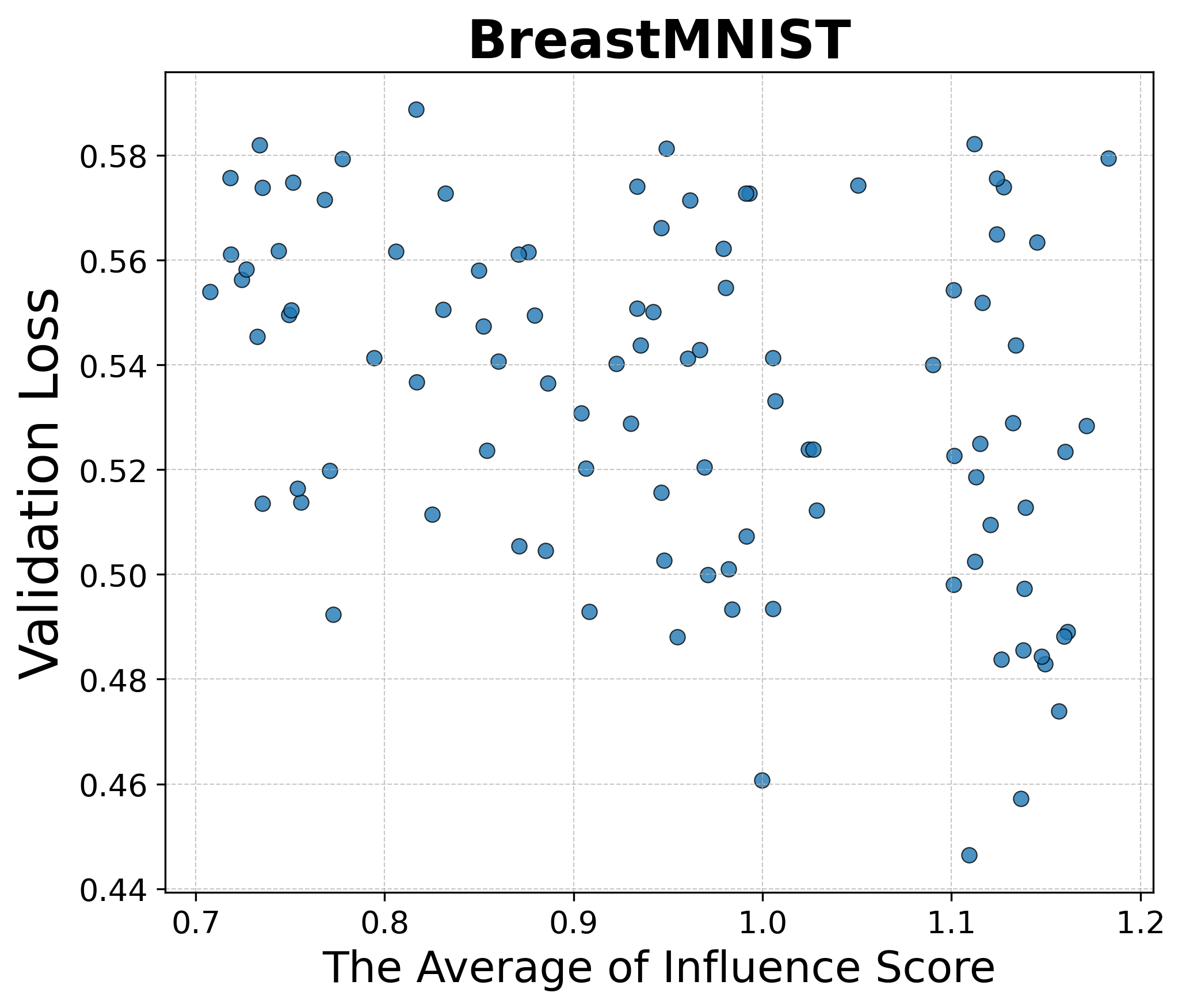}
    \caption{The average of original influence scores does not corrleate with validation loss.}
    \label{fig:Inf_correlation}
\end{figure}
We compute the average influence score for each class (Eq.~\eqref{eq:inf}) and examine its correlation with validation loss. As shown in Figure~\ref{fig:Inf_correlation}, no consistent relationship is observed, suggesting influence alone is insufficient in this setting.

\subsection{Main Experiments} 
\label{app:main_exp}

\paragraph{DermaMNIST: Multi-class and Binary Setup}
For DermaMNIST, we consider two evaluation scenarios. First, we use the original 7-class (multi-class) setting as-is. Second, we construct a binary setting by restricting evaluation to two clinically relevant classes—benign and malignant melanocytic lesions, following standard practice suggested in ~\cite{ali2021enhanced, tahir2023dscc_net}. This recasts DermaMNIST as a binary classification task.

\paragraph{Dataset splits and statistics.}
We provide the data statistics used for validation and testing in Table~\ref{tab:data_split}. When training the diffusion model via RL, we use a class-balanced validation set. This is constructed by randomly sampling \(\min(|\mathcal{V}_0|, |\mathcal{V}_1|)\) examples from each class.  In contrast, when training the classification model, we use the full (original) validation set without balancing.

\begin{table}[h]
\centering
\small
\caption{Number of samples per class (label 0 / label 1) in the balanced validation set, original validation set, and test set.}
\begin{tabular}{lccc}
\toprule
\textbf{Dataset} & \textbf{Balanced validation set}  & \textbf{Original validation set }  & \textbf{Test set} \\
\midrule
BreastMNIST     & 21 / 21    & 21 / 57 & 42 / 114 \\
PneumoniaMNIST  & 135 / 135  &135 / 389& 234 / 390 \\
DermaMNIST-binary      & 111 / 111  & 671 / 111 & 1341 / 223 \\
\bottomrule
\end{tabular}
\label{tab:data_split}
\end{table}

\paragraph{Results with a smaller validation set.}
For PneumoniaMNIST, the validation set is relatively large (Table~\ref{tab:data_split}). To examine the effect of validation size, we conducted an additional experiment by reducing it to 16 samples per class, matching the number of few-shot training samples.
\begin{table}[h]
\centering
\small
\begin{tabular}{lcc}
\toprule
\textbf{Model} & \textbf{Ours (135 val/class)} & \textbf{Ours (16 val/class)} \\
\midrule
ViT & 0.945 & 0.946 \\
ResNet-50 & 0.930 & 0.937 \\
ResNet-18 & 0.956 & 0.946 \\
\midrule
\textbf{Average} & 0.944 & 0.943 \\
\bottomrule
\end{tabular}
\caption{Classifier performance on PneumoniaMNIST with different validation set sizes for RL fine-tuning.}
\label{tab:pneumonia_val}
\end{table}
\textit{As shown in Table \ref{tab:pneumonia_val}, even with a much smaller validation set, our method maintains comparable performance, demonstrating robustness to validation set size.}

\paragraph{Diffusion model fine-tuning.}
In RL fine-tuning, we fine-tune the diffusion model using a RL framework adapted from~\cite{black2023training}. We present hyperparameters used in Table \ref{tab:diffusion_hparams}.

\begin{table}[h]
\centering
\small
\caption{Hyperparameters used in diffusion model RL fine-tuning.}
\begin{tabular}{l|c}
\toprule
\textbf{Component} & \textbf{Value / Setting} \\
\midrule
Backbone model             & Stable Diffusion 2.1 fine-tuned in Step 1 pretraining  \\
LoRA rank                  & 16 \\
LoRA $\alpha$                 & 16 \\
Mixed precision & float 16 \\
The number of inference steps & 50 \\
ETA & 0.1 \\
Guidance scale & 0.2 \\
Learning rate & $3 \times 10^{-4}$ \\
Batch size       & 28 \\
Clip range & $1 \times 10^{-4}$ \\
\bottomrule
\end{tabular}
\label{tab:diffusion_hparams}
\end{table}

\paragraph{Classification model fine-tuning.}
In Pretraining, we fine-tune a ViT-B/16 model pre-trained on ImageNet using LoRA for 20 epochs on the same few-shot subset. We set the LoRA rank and alpha to 16, and use a LoRA dropout rate of 0.1.
When training with augmented data, we use the augmented training sets generated by different augmentation methods to train classification models for downstream evaluation. To assess the generalizability of our approach, we test two architectures: ViT-B/16, which was used during RL fine-tuning, and ResNet18, which was not. The ViT-B/16 model is initialized with ImageNet pretraining, whereas the ResNet18 model is trained from scratch. The hyperparameters used for each model are summarized in Table~\ref{tab:classifier_hparams}.

\begin{table}[h]
\centering
\small
\caption{Hyperparameters used for classifier training.}
\begin{tabular}{lll}
\toprule
\textbf{Component} & \textbf{ViT-B/16} & \textbf{ResNet18} \\
\midrule
Initialization          & LoRA fine-tuning      & Trained from scratch \\
LoRA rank / $\alpha$       & 16 / 16               & -- \\
Batch size              & 32                    & 32 \\
Learning rate           & \(5 \times 10^{-4}\)  & \(1 \times 10^{-4}\) \\
Warm-up epochs          & 5                     & 5 \\
LR scheduler            & Linear decay          & Linear decay \\
Epochs                  & 100                   & 100 \\
Early stopping criterion & Validation AUC       & Validation AUC \\
\bottomrule
\end{tabular}
\label{tab:classifier_hparams}
\end{table}

For the baseline methods RandAugment and RandomErasing, we use the PyTorch transforms implementations: \texttt{RandAugment} and \texttt{RandomErasing}, respectively, with their default arguments.

\paragraph{Baselines.}
DataDream~\cite{kim2024datadream} is our primary baseline which provides the starting checkpoint for RL fine-tuning. We also include Dataset Expansion~\cite{zhang2023expanding} and DistDiff~\cite{zhu2024distribution}, diffusion-based augmentation methods that incorporates noise injection and classifier guidance; we use the original implementation. For transformation-based data augmentation, we evaluate RandAugment~\cite{cubuk2020randaugment}, RandomErasing~\cite{zhong2020random} and Mixup~\cite{zhang2017mixup}.

\paragraph{Comparison with the simple baseline to make hard examples.}

We implemented a simple baseline where Stable Diffusion is fine-tuned using only the validation images that were misclassified by the classifier. Specifically, we first trained a classifier using few-shot examples, and then identified the misclassified validation samples. For instance, in the \texttt{BreastMNIST} dataset, the validation set contains 21 samples per class. The trained classifier misclassified 15 of these samples---9 from class 0 and 6 from class 1. We used these 15 misclassified images to fine-tune Stable Diffusion and then generated 500 synthetic images per class, consistent with our main experimental setup. The results of this baseline are provided below in Table \ref{tab:breastmnist_baseline}.

\begin{table}[h]
    \centering
    \caption{Comparison of AUC scores on BreastMNIST between the original classifier, the simple misclassified-sample baseline, and our proposed method.}
    \begin{tabular}{l l c}
        \hline
        \textbf{Dataset} & \textbf{Method} & \textbf{AUC} \\
        \hline
        BreastMNIST & Original Only & 0.828 \\
        BreastMNIST & Simple Baseline (misclassified samples) & 0.778 \\
        BreastMNIST & Our Method & \textbf{0.885} \\
        \hline
    \end{tabular}
    \label{tab:breastmnist_baseline}
\end{table}

\paragraph{Training sample size for BreastMNIST.}
For BreastMNIST, we use 32 samples per class instead of 16 to ensure reliable classifier training. Preliminary experiments showed that using only 16 samples per class resulted in significantly lower classification performance (e.g., accuracy around 66\%), compared to DermaMNIST (75\%) and PneumoniaMNIST (87\%). Since this classifier serves as the reward model for RL fine-tuning, lower performance on BreastMNIST led to unreliable gradient-based reward signals. Increasing the sample size to 32 per class resulted in 70\% accuracy and yielded a more stable classifier for use in RL fine-tuning.

\paragraph{Experiment details for Figure~\ref{fig:synthesized_vary}.}
We first synthesize 500 samples using various data augmentation techniques, following the practical setup in~\cite{kim2024datadream}. For experiments involving fewer than 500 samples, we randomly sample 5 different subsets using different random seeds and train a separate classification model for each subset. 

\paragraph{Computing resources.}
All reinforcement learning (RL) training experiments are conducted using a single NVIDIA A100 GPU with 40GB of memory.

\paragraph{Generated samples.}
Figure~\ref{fig:synthetic} provides representative synthetic images generated by our method, which are later used to augment BreastMNIST and DermaMNIST-binary training.
\begin{figure}[h]
    \centering
    \includegraphics[width=0.95\linewidth]{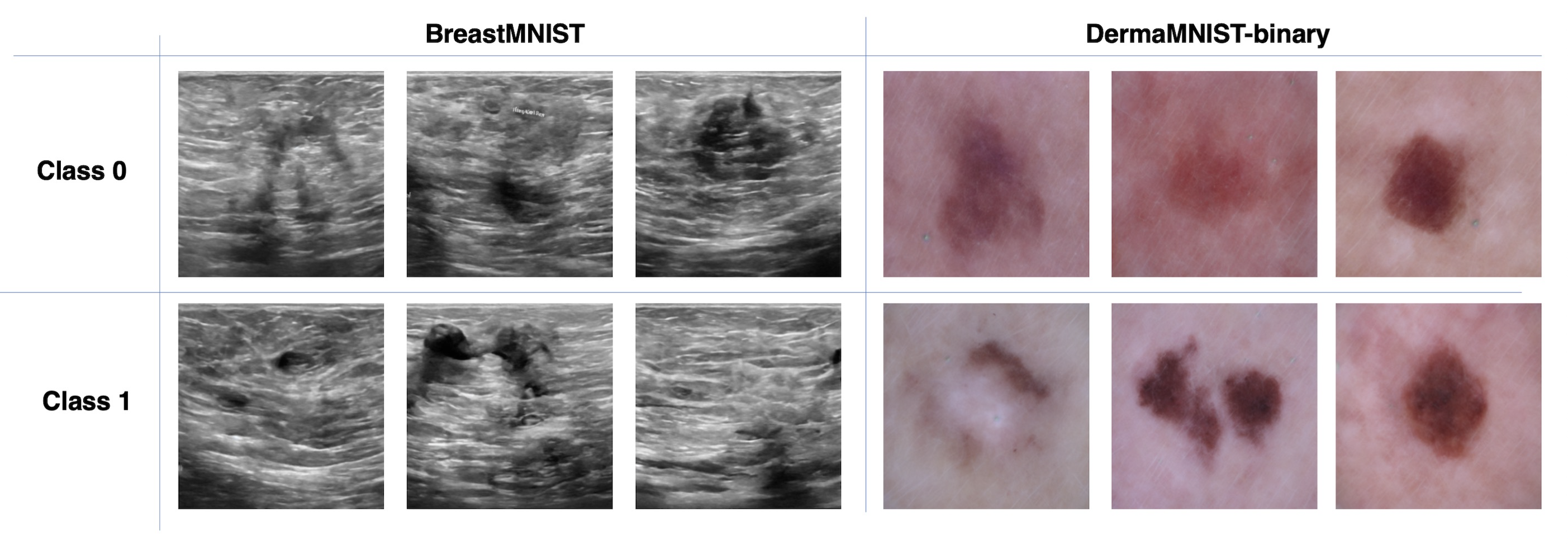}
    \caption{Examples of synthetic images generated by our method for BreastMNIST and DermaMNIST-binary. 
    Each column shows samples from one class, illustrating that the generated images capture class-distinctive visual patterns while maintaining diversity within each class.}
    \label{fig:synthetic}
\end{figure}

\end{document}